\documentclass{article}

 \usepackage[preprint]{neurips_2026}

\usepackage[utf8]{inputenc} 
\usepackage[T1]{fontenc}    
\usepackage{hyperref}       
\usepackage{url}            
\usepackage{booktabs}       
\usepackage{amsfonts}       
\usepackage{nicefrac}       
\usepackage{microtype}      
\usepackage{xcolor}         

\usepackage{amsmath}        
\usepackage{amsthm}
\usepackage{graphicx}
\usepackage{mathtools}

\newtheorem{theorem}{Theorem}

\title{Learning in Deep Networks under Dale's Constraint }

\author{%
  Roy Abel \\
  Weizmann Institute of Science \\
  \texttt{roy.abel@weizmann.ac.il} \\
  \And
  Shimon Ullman \\
  Weizmann Institute of Science \\
  \texttt{shimon.ullman@weizmann.ac.il} \\
}

\begin{document}

\maketitle

\begin{abstract}
  Biologically plausible learning models aim to explain how neural circuits can implement effective learning under the constraints of real neurons. Although significant progress has been made, a major remaining challenge is that existing models often allow neurons or synapses to represent mixed-sign values, both positive and negative, in violation of a basic aspect of cortical circuitry -- Dale's constraint: biological neurons are either excitatory or inhibitory, but not both, and synapses cannot change sign. In this work, we address this discrepancy by introducing a biologically motivated neural architecture in which both neural activations and learning signals are represented by non-negative activity, and synapses have fixed sign, while still supporting backpropagation-like learning. Our approach uses two complementary interacting non-negative channels to represent positive and negative contributions, inspired by evidence of on-off representations in the brain. These channels are implemented through a simple neural circuit motif, which is repeated throughout the network in both bottom-up and top-down pathways. Combined with a local Hebbian learning rule, the resulting model propagates learning signals and updates weights using only local interactions between neurons. We show theoretically that our learning scheme can exactly recover the backpropagation update despite relying solely on non-negative error signals. Empirically, beyond satisfying stronger biological constraints, the on-off architecture learns efficient representations, yielding substantial gains over comparable vanilla networks on the Tiny ImageNet benchmark. These results demonstrate that effective learning can emerge from biologically plausible mechanisms without requiring mixed-sign signals, providing a step toward more realistic models of neural computation.
\end{abstract}


\section{Introduction}
    
Deep learning has become an important framework for systems neuroscience, where architectures, objective functions, and learning rules serve as computational hypotheses about neural circuits in the brain \citep{richards2019deep}.  Backpropagation is the central mechanism behind modern deep learning, but a direct implementation of its computations in biological neural circuits remains difficult to reconcile with known properties of the brain \citep{rumelhart1986learning, whittington2019theories, lillicrap2020backpropagation}. Such an implementation would require mechanisms for global objective optimization, non-local credit assignment, closely matched forward and backward pathways, and the propagation of signed error signals through the network. These challenges have motivated a large literature on biologically plausible learning, including predictive coding, equilibrium propagation, feedback alignment, target propagation, and dendritic learning models \citep{lee2015difference, lillicrap2016random, scellier2017equilibrium, whittington2017approximation, guerguiev2017towards, sacramento2018dendritic, payeur2021burst, hinton2022forward, abel2024biologically, oliviers2026bidirectional}. A common goal across these approaches is to preserve the computational benefits of gradient-based credit assignment while replacing the mechanism with local, biologically plausible computations.

A particularly fundamental obstacle is the representation of signed quantities. In artificial networks, activations, errors, gradients, and synaptic weights are numerical variables that can be positive or negative. In biological circuits, however, neuronal activity is non-negative: neurons fire at non-negative rates and cannot directly transmit negative activity \citep{lillicrap2020backpropagation}. In addition, neurons are constrained by Dale's law: each neuron has a fixed excitatory or inhibitory identity, so its outgoing synaptic effects share the same sign \citep{strata1999dale}. That is, a single neuron cannot exert both positive and negative outgoing influence. Thus, a biologically plausible learning mechanism must address not only how to compute credit assignment locally, but also how to do so in a Dale-constrained network: signed activations and signed learning signals must be represented using positive neural activity and fixed-sign synaptic connectivity.

Existing work addresses parts of this problem, but not the full combination of constraints. Predictive coding and related models show that local error units can approximate, and in some settings recover, backpropagation-like updates \citep{whittington2017approximation, song2020can, salvatori2022reverse}. Recent refinements further impose non-negative firing rates on predictive-coding error neurons \citep{alonso2021tightening}, but it still does not enforce Dale's law: the same neuron may still mediate both positive and negative outgoing effects unless its synaptic influence is sign-constrained. In contrast, Daleian neural networks explicitly enforce excitatory/inhibitory structure, which was shown to induce strong expressive power and robustness \citep{barranca2022functional, haber2022computational}, but are typically trained with backpropagation or other gradient-based optimization procedures. Thus, despite significant contributions made by recent models, the problem of performing effective supervised credit assignment using local biological mechanisms is still unresolved.   

One possible resolution is to represent signed learning signals through pairs of non-negative neural populations rather than through individual mixed-sign error units. \citet{lillicrap2020backpropagation} argue that cortical feedback may support credit assignment by inducing neural activities whose differences approximate error signals. Recent experimental evidence from \citet{francioni2026vectorized} further supports backpropagation-like learning in the brain, reporting vectorized dendritic feedback signals with opposing positive and negative contributions to learning, as well as task variables expressed through differences between positive populations. Together, these findings suggest that signed error signals may emerge from structured interactions between non-negative neural activities, but a concrete neural circuit architecture and local learning mechanism for implementing this principle remain open.

In this work, we introduce an \textit{on-off neural circuit motif} that is composed of excitatory and inhibitory neurons. The circuit produces two non-negative output channels corresponding to positive and negative components of a represented quantity. This motif is repeated throughout the network in a bottom-up and top-down structure \citep{gilbert2013top}, so that both feedforward activations and feedback error signals are represented by paired non-negative channels while preserving excitatory/inhibitory separation.

We further propose a local Hebbian learning update that combines bottom-up activity with paired top-down feedback signals. One top-down population contributes positive synaptic updates, and another population contributes negative updates, so the effective signed learning signal is recovered through their combined influence. Thus, the model implements credit assignment through effective signed signals that emerge from paired positive neural populations while preserving fixed excitatory and inhibitory circuit structure. We show theoretically that, under symmetrical bottom-up and top-down connectivity, the proposed learning scheme recovers the backpropagation update despite propagating only non-negative error channels. Empirically, we show that the model learns effectively on image-classification benchmarks and can outperform comparable standard architectures,  suggesting that the on-off representation can provide computational benefits beyond its biological motivation.

\section{Background}
\label{sec:background}


\paragraph{Core biological constraints.}
We focus on three constraints that are central to biological neural computation. First, neuronal activity is non-negative: neurons communicate through firing rates or spikes, and therefore cannot directly represent negative scalar values. Second, synaptic influence is sign-constrained. Under Dale's law, each neuron has a fixed excitatory or inhibitory identity, implying that its outgoing synaptic effects share the same sign. Third, synaptic plasticity is local: a synaptic update should depend on quantities available at or near the synapse, rather than on globally computed gradients. Together, these constraints rule out a direct biological implementation of standard artificial neurons, which rely on signed activations and sign-unconstrained weights.

\paragraph{Biologically plausible credit assignment.}
A large body of work seeks to replace backpropagation with more biologically plausible mechanisms while preserving the benefits of gradient-based credit assignment. Predictive coding models use local prediction errors and recurrent inference dynamics to approximate gradient-based learning \citep{rao1999predictive, whittington2017approximation}, with later variants recovering exact backpropagation updates under additional assumptions and architectural modifications \citep{song2020can, salvatori2022reverse}, and other refinements imposing non-negative firing rates on predictive-coding neurons \citep{alonso2021tightening}. However, such modifications have been criticized for reducing biological plausibility \citep{rosenbaum2022relationship, golkar2022constrained}. Equilibrium propagation computes local updates from changes in network equilibria \citep{scellier2017equilibrium, ernoult2019updates, millidge2020activation}. Feedback alignment relaxes the weight-transport problem by using separate feedfroward and feedback weights \citep{lillicrap2016random, nokland2016direct, akrout2019deep, webster2020learning}, while target propagation transmits target activations for the forward path rather than gradients \citep{lee2015difference, meulemans2020theoretical, ernoult2022towards}. Dendritic learning models separate feedforward and feedback information into distinct neuronal compartments and  focus on local mechanisms \citep{guerguiev2017towards, sacramento2018dendritic, payeur2021burst}. Hebbian-based learning methods provide local mechanisms in which synaptic updates depend on presynaptic and postsynaptic activity \citep{hebb2005organization, lagani2021hebbian, abel2024biologically}. Complementary forward-only approaches avoid an explicit backward pass and instead learn representations through unsupervised, contrastive, or layerwise objectives \citep{hinton2022forward, chen2025self}. However, most of these approaches still do not enforce Dale's law, and rely on mixed-sign error signals, sign-unconstrained synaptic effects, or both.

\paragraph{Daleian networks.}
A complementary line of work studies artificial networks that explicitly enforce Dale's law by separating excitatory and inhibitory units. Dale-constrained recurrent networks have been used to model task-dependent neural dynamics \citep{song2016training}, and other work has analyzed the dynamical consequences of Daleian structure, including balanced dynamics and robustness \citep{barreiro2017symmetries, barranca2022functional}. More recently, \citet{haber2022computational} showed that Daleian networks retain strong expressive power, can approximate non-Daleian computations, and can be more robust to synaptic noise. They also study learning based on tuning neuron-level parameters, such as incoming and outgoing synaptic scaling and neuronal bias, rather than directly updating individual synapses. This provides an important step toward learning in Daleian networks, but it does not solve supervised credit assignment in deep networks: it does not propagate error signals through layers under Dale-constrained circuitry, nor does it provide a local synaptic rule that recovers gradient-like updates. More broadly, most Dale-constrained architectures impose the Daleian structure only on the forward computation, while training still relies on backpropagation or related gradient-based procedures whose biological implementation would violate the same constraints. Thus, they pair a biologically realistic architecture with a biologically implausible learning mechanism.

\paragraph{Paired on-off channels in visual processing}
Paired, complementary channels are a recurring principle in biological vision. In the retina, ON and OFF pathways respond preferentially to light increments and decrements \citep{kuffler1953discharge, schiller1992and}. In primary visual cortex, simple-cell receptive fields contain ON and OFF subregions, often organized through interactions between excitation and inhibition \citep{awang1962receptive, ringach2004mapping, liu2010intervening}. These findings do not establish the exact circuit proposed here, but they support the biological plausibility of representing visual variables through structured complementary populations. 


\section{Model}
\label{sec:model}

We introduce a neural architecture that is Daleian in the following sense: (i) it represents signed quantities using only non-negative neuronal activity, (ii) all synapses have a fixed sign synaptic influence, and (iii) all synapses from a cell to all its target neurons are of the same sign, excitatory or inhibitory. The central component is a repeated \textit{on-off neural circuit motif}: a small excitatory--inhibitory circuit that maps two non-negative inputs to a pair of non-negative output channels. The two channels encode positive and negative components of an underlying signed quantity, without requiring any individual neuron to carry negative activity. The same motif is used throughout the network in both the bottom-up (BU) stream, which computes activations, and the top-down (TD) stream, which propagates feedback signals for learning. Together with a local Hebbian update rule, this construction enables signed credit assignment under non-negative activity and Dale-constrained connectivity.   

\subsection{On-off neural circuit motif}

\begin{figure}[t]
  \centering
  \includegraphics[width=0.85\linewidth]{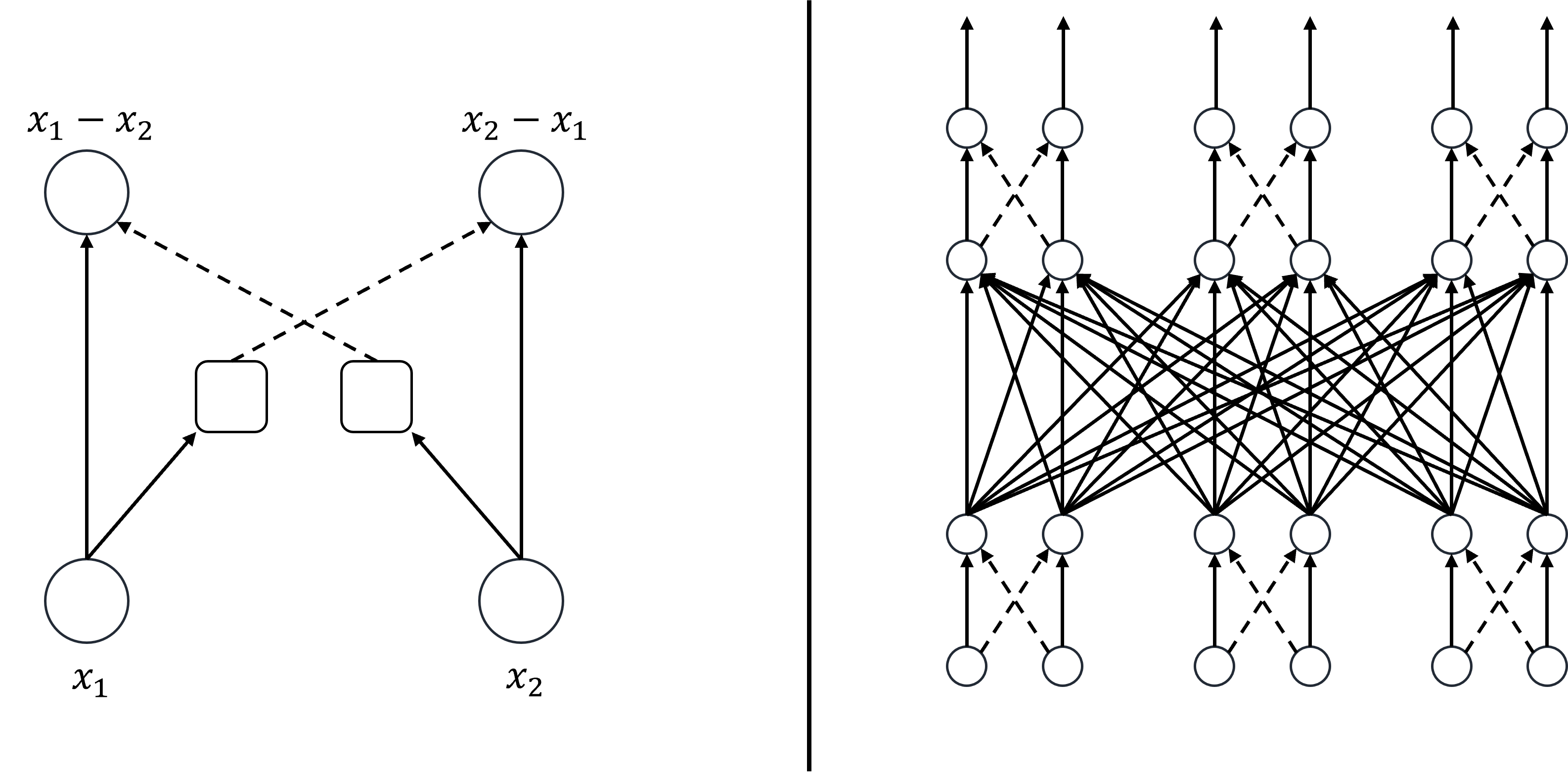}
    \caption{
    \textbf{On-off neural circuit motif.}
    \textbf{Left:} The basic motif receives two non-negative inputs, $x_1$ and $x_2$, and produces two non-negative output channels. Circles denote excitatory neurons and rectangles denote inhibitory neurons; solid and dashed arrows denote excitatory and inhibitory synapses, respectively. When all synaptic strengths are equal to one, the circuit encodes the signed difference between the inputs: the On channel fires when $x_1 > x_2$, while the Off channel fires when $x_2 > x_1$. Thus, a signed value is represented by two non-negative channels, with at most one active channel.
    \textbf{Right:} Networks are constructed by repeating this motif across layers. Inter-layer connectivity follows the standard pattern for the non-negative channels, while synaptic weights remain sign-constrained.
    }
    \label{fig:on_off_motif}
\end{figure}

The basic motif receives two non-negative inputs, $x_1,x_2 \geq 0$, and produces two non-negative output channels, denoted $y^+$ and $y^-$. The circuit contains excitatory input and output neurons, together with intermediate inhibitory neurons that mediate cross-channel inhibition (Figure~\ref{fig:on_off_motif}). Each input excites one output channel and, through an inhibitory inter-neuron, suppresses the opposite output channel. To illustrate the computation performed by the motif, consider the special case in which all internal synaptic magnitudes are equal to one. The circuit then computes the signed contrast between its inputs: $d = x_1 - x_2$. The two output channels are obtained by applying complementary rectifying nonlinearities:
\begin{equation}
\begin{aligned}
    y^+ &= \mathrm{ReLU}^{+}_{\theta}(d)
         \coloneqq \max(d-\theta,0), \\
    y^- &= \mathrm{ReLU}^{-}_{\theta}(d)
         \coloneqq \max(-d-\theta,0),
\end{aligned}
\end{equation}

where $\theta \geq 0$ is a threshold and ReLU is the Rectified Linear Unit function. 
Thus, the On channel is active when $x_1$ sufficiently exceeds $x_2$, the Off channel is active when $x_2$ sufficiently exceeds $x_1$, and neither channel is active when the contrast magnitude is below the threshold. Therefore, at most one output channel is active. Although the network itself propagates only the non-negative channels $y^+$ and $y^-$, the signed value represented by the motif is encoded by the pair: A positive difference $d$ is propagated through $y^+$, whereas a negative difference is propagated with the same positive magnitude through the opposite channel $y^-$, thereby encoding the sign through channel identity.

\paragraph{General sign-constrained motif.}
More generally, the internal synaptic magnitudes of the motif need not be fixed to one. They can be learned while preserving their excitatory or inhibitory signs. In this case, the motif computes a soft weighted difference of the input neurons. This relaxes the hard unit-magnitude wiring while maintaining the excitatory/inhibitory structure of the motif. 

\subsection{Network architecture}

A full network is constructed by replacing each hidden mixed-sign neuron in a standard architecture with an on-off motif, preserving the original connectivity pattern while expanding each scalar activity into a pair of non-negative channels.

Each motif outputs two channels, so a layer with $n$ on-off units contains $2n$ non-negative activity channels. Inter-layer connections follow the connectivity pattern of the original architecture and are implemented as excitatory connections, since the output neurons of each on-off motif are excitatory, as illustrated in Figure~\ref{fig:on_off_motif}. Negative influences are not carried by negative inter-layer weights, but are mediated by the complementary on-off channels and the inhibitory circuitry within each motif. Learning therefore adjusts synaptic magnitudes while preserving excitatory or inhibitory identity, enforcing Dale-constrained connectivity while allowing flexible learned transformations.

\subsection{Top-down propagation}

The TD stream propagates feedback signals using the same on-off motif structure as the BU stream. For each BU motif, we associate two corresponding TD motifs, one for the BU On channel and one for the BU Off channel. Each TD motif contains two non-negative neuron populations, corresponding to positive and negative synaptic updates. 

TD propagation flows in the opposite direction to the BU, and combines three operations. First, lateral BU-TD connectivity gates the TD neural activity according to the active BU channel, suppressing the TD pathway associated with the inactive channel. This bidirectional gating mechanism is similar to the BU-TD connectivity proposed by \citet{abel2024biologically}, and plays a role analogous to the derivative of a rectifying nonlinearity in standard backpropagation. Second, TD activity is processed within each layer by the on-off motif and cross-population connectivity between positive and negative feedback populations. Finally, feedback is propagated to lower layers through TD inter-layer connections, separately within each TD population. Thus, signed feedback is represented by differences between paired non-negative TD populations while all neuronal activities remain non-negative. A complete schematic of the TD flow is provided in Appendix~\ref{app:td_flow}.

\subsection{Local Hebbian learning scheme}

\begin{figure}[t]
  \centering
  \includegraphics[width=0.4\linewidth]{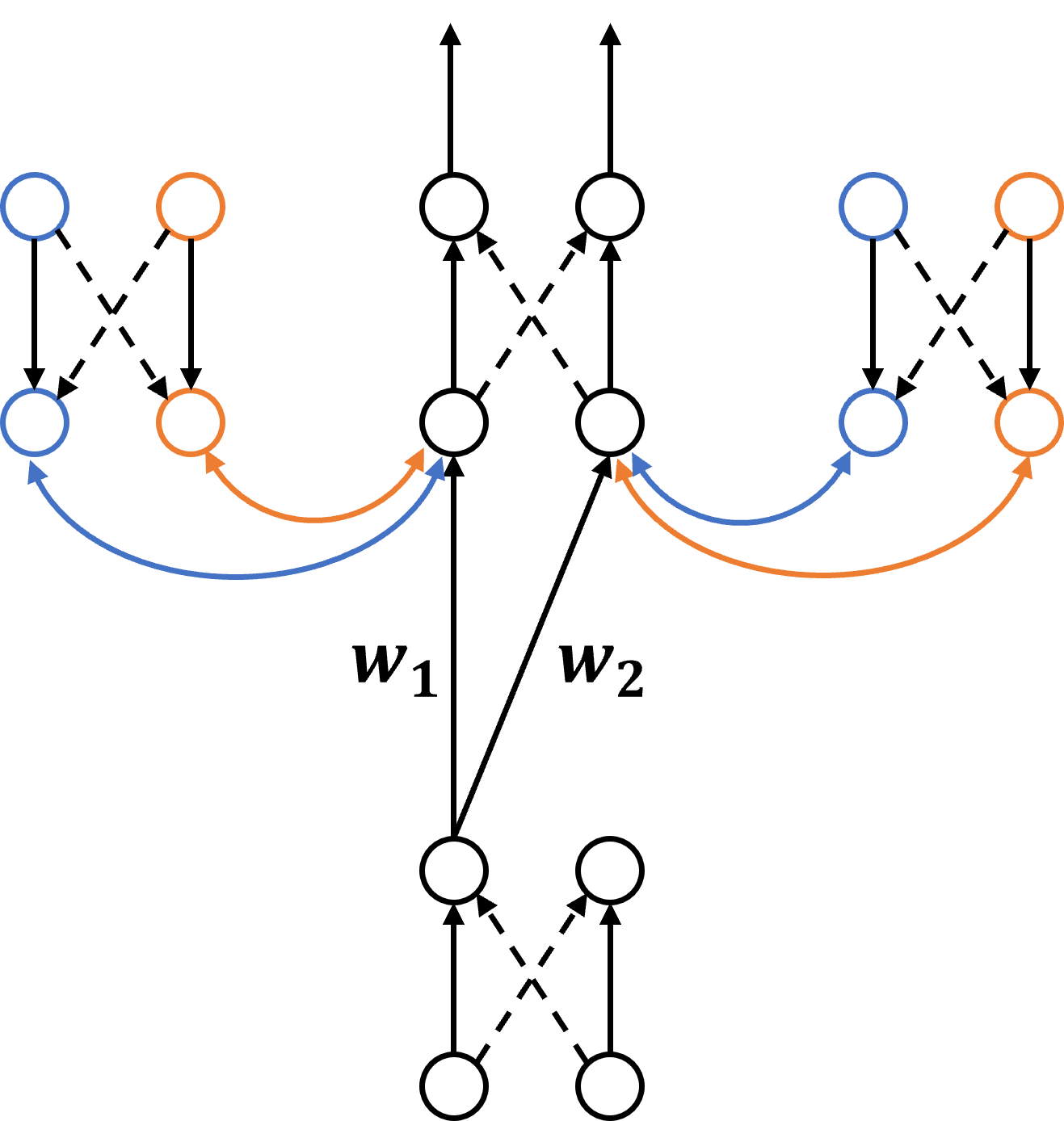}
    \caption{
    \textbf{Top-down streams and Hebbian learning.}
    The figure illustrates the update of inter-layer bottom-up weights connecting consecutive on-off layers. A bottom-up on-off motif receives feedback signals from two corresponding top-down motifs, associated with its On and Off channels. The top-down activities are non-negative, but are separated into channels corresponding to positive and negative synaptic updates (blue and orange). Learning is local: activity in the positive-update channel drives a Hebbian increase, whereas activity in the negative-update channel drives a Hebbian decrease, yielding an effective signed update proportional to the difference between paired top-down signals. For clarity, the figure shows only the connections relevant to the synaptic updates of a single bottom-up motif; a complete schematic is provided in Appendix~\ref{app:model_details}.
    }
    \label{fig:on_off_learning}
\end{figure}

Learning is implemented by a local Hebbian update rule that combines presynaptic BU activity with paired TD feedback signals. For a synapse with presynaptic activity $x_j$, learning consists of two Hebbian contributions: activity in the positive TD channel drives a Hebbian increase, whereas activity in the negative TD channel drives a Hebbian decrease: 
\begin{equation}
\begin{aligned}
    \Delta W_{ij} &= \eta x_j \bar{\partial}_i, \\
    \Delta W_{ij} &= -\eta x_j \underline{\partial}_i .
\end{aligned}
\end{equation}
Here, $\bar{\partial}_i,\underline{\partial}_i \geq 0$ denote the activities of the paired TD channels associated with the postsynaptic unit: $\bar{\partial}_i$ contributes to weight increases, while $\underline{\partial}_i$ contributes to weight decreases, see Figure~\ref{fig:on_off_learning}. Combining the two Hebbian contributions yields the effective synaptic update
\begin{equation}
    \Delta W_{ij}
    = \eta x_j(\bar{\partial}_i - \underline{\partial}_i).
\end{equation}
Thus, the effective signed feedback signal can be viewed as:
\begin{equation}
    \partial_i = \bar{\partial}_i - \underline{\partial}_i ,
\end{equation}
where the sign of the update is determined by the relative activity of two separated TD populations, rather than by a signed error neuron.

This rule is local: the update of a synapse depends only on the presynaptic BU activity neuron and TD feedback signals arriving at the corresponding postsynaptic channel. The synaptic sign itself remains fixed; learning modifies synaptic magnitudes while preserving excitatory or inhibitory identity.

TD synapses follow the same Hebbian principle. Each TD weight is modified using its own presynaptic TD activity together with the corresponding postsynaptic BU-channel activity and the opposing TD-channel signal. As a result, corresponding BU and TD synapses receive the same update. Therefore, if the weights of the two pathways are aligned before the update, they remain aligned afterward.

The proposed architecture, therefore, provides a mechanism for representing activations, propagating feedback, and updating synapses using only non-negative neuronal activity and fixed-sign circuitry. In the next section, we analyze when this local learning rule produces gradient-based updates.

\section{Theoretical analysis}
\label{sec:theory}

We analyze the learning dynamics of the proposed model and show that it can produce  gradient-based updates despite operating with non-negative neuronal activity and sign-constrained connectivity. The key mechanism enabling this result is the representation of signed signals via differences between paired non-negative channels.

\subsection{Backpropagation}
Consider a standard feedforward network, where $h_l$ denotes the activations in layer $l$, $W_l$ denotes the weights from layer $l-1$ to layer $l$, and $\sigma$ is the activation function. Backpropagation computes signed learning signals $\delta_l$ recursively:
\begin{equation}
    \delta_{l-1}
    =
    \sigma'(h_{l-1}) W_l^T \delta_l ,
    \label{eq:bp_recursion}
\end{equation}
and the corresponding weight update is proportional to
\begin{equation}
    \Delta W_l
    \propto
    -\frac{\partial L}{\partial W_l}
    =
    \delta_l h_{l-1}^T .
    \label{eq:bp_update}
\end{equation}
The central challenge for a biological implementation is that $\delta_l$ is a signed quantity that propagates backward through the network. 

\subsection{Propagation through the on-off top-down stream}

In the on-off model, a signed learning signal is represented by two non-negative TD populations with opposing instructive roles: one drives increases of the corresponding BU synapses, whereas the other drives their decrease. Denoting their activities by $\bar{\partial}_l$ and $\underline{\partial}_l$, the effective signed learning signal is:
\begin{equation}
    \partial_l
    =
    \bar{\partial}_l - \underline{\partial}_l,
    \qquad
    \bar{\partial}_l,\underline{\partial}_l \geq 0 .
    \label{eq:td_readout}
\end{equation}
Here, $\partial_l$ is only used for mathematical analysis; it is not transmitted as signed neuronal activity.

\paragraph{Theorem.}
Assume that (i) the BU network consists of linear operations, such as fully connected, locally connected, convolutional layers, or residual connections, followed by ReLU nonlinearities, and (ii) the BU and TD weights are aligned in their weights. Let $\delta_l$ denote the standard backpropagation error signal at layer $l$. Then the effective TD signal satisfies the same recursion as $\delta_l$:
\[
    \bar{\partial}_l - \underline{\partial}_l = \delta_l.
\]
Consequently, the on-off Hebbian learning rule yields the same weight update as backpropagation.

\paragraph{Proof sketch.}
A full proof is provided in Appendix~\ref{app:gradient_recovery}. Let $W_l^+$ and $W_l^-$ denote the corresponding TD inter-layer weights for the positive and negative channels. The TD propagation satisfies
\begin{equation*}
    \bar{\partial}_{l-1}
    =
    \mathbf{Gate}_{h_{l-1}}  W_l^{+T} \bar{\partial}_l,
    \qquad
    \underline{\partial}_{l-1}
    =
    \mathbf{Gate}_{h_{l-1}}  W_l^{-T} \underline{\partial}_l .
\end{equation*}
Here, $ \mathbf{Gate}_{h_{l-1}}$ denotes the diagonal gating matrix induced by the BU activations $h_{l-1}$. Under the symmetric channel connectivity of the on-off model, $W_l^T = W_l^+ = W_l^-$, taking the difference between the two propagated channels gives
\begin{equation*}
    \bar{\partial}_{l-1}
    -
    \underline{\partial}_{l-1}
    =
      \mathbf{Gate}_{h_{l-1}} W_l^T(\bar{\partial}_l-\underline{\partial}_l).
    \label{eq:td_difference_propagation}
\end{equation*}
Assume that $\delta_l = \bar{\partial}_l - \underline{\partial}_l$ holds in layer $l$. Since the gating operation corresponds to the derivative of the ReLU activation, we obtain
\begin{equation}
    \bar{\partial}_{l-1}
    -
    \underline{\partial}_{l-1}
    =
     \sigma'(h_{l-1}) W_l^T(\delta_l)
    =
     \delta_{l-1}.
    \label{eq:td_bp_recursion}
\end{equation}
Therefore, by induction, if the equality holds at the top layer, it holds for all layers.

Finally, the Hebbian on-off learning rule gives:
\begin{equation}
    \Delta W_l
    =
    \eta(\bar{\partial}_l-\underline{\partial}_l)h_{l-1}^T
    =
    \eta\delta_l h_{l-1}^T,
\end{equation}
which matches the gradient descent update in Eq.~\ref{eq:bp_update}.

\paragraph{Separate BU and TD weights.}
The exact result assumes aligned BU and TD connectivity. In practice, the two pathways are parameterized separately. However, as described in Section~\ref{sec:model}, corresponding BU and TD synapses receive matched local updates. Therefore, if the two pathways are aligned, this alignment is preserved throughout learning; if they are initialized close to alignment, the TD computation approximates the backpropagation recursion.

\section{Experiments}
\label{sec:experiments}

We evaluate the proposed on-off model in two regimes. First, controlled experiments on \textit{MNIST} \citep{lecun1998gradient}, \textit{Fashion-MNIST} \citep{xiao2017fashion}, and \textit{CIFAR-10} \citep{krizhevsky2009learning} examine three questions: (i) whether the local Hebbian update matches backpropagation in the symmetric setting, (ii) whether the on-off representation is competitive with standard ReLU networks of comparable size, and (iii) how performance depends on bottom-up/top-down synaptic alignment and the parameterization of the on-off internal motif. Second, experiments on \textit{Tiny ImageNet} \citep{le2015tiny} evaluate whether the on-off architecture and local Hebbian learning scale to a more challenging visual benchmark.

\subsection{Comparative experiments}

\paragraph{Setup.}
We evaluate fully connected two-layer networks. For MNIST and Fashion-MNIST, we use 250 hidden on-off units per layer and train for 20 epochs. For CIFAR-10, we use 500 hidden on-off units per layer and train for 50 epochs. All experiments use batch size 64 and learning rate $10^{-2}$, without extensive hyperparameter tuning. We report the performance at the best epoch during training, averaged across 10 random seeds.

We compare against two ReLU MLP baselines, trained with standard backpropagation. The first has the same number of effective hidden units as the on-off model, replacing every on-off unit motif with standard artificial neuron, while the second doubles the width to match the actual number of hidden neurons, since each on-off unit contains two non-negative channels.

\paragraph{Variants.}
We evaluate several on-off variants. \textit{BP} trains the on-off architecture with standard backpropagation and serves as a gradient-based reference. \textit{Sym} uses the proposed Hebbian learning scheme with symmetric BU and TD weights. \textit{Asym} initializes the two pathways far from alignment, using approximately orthogonal corresponding BU and TD weights. \textit{Weak Sym} initializes the pathways close to, but not exactly symmetric. \textit{Noisy} extends the weakly symmetric setting by adding independent noise to the local updates.

We also vary the internal on-off motif. The default fixed-magnitude motif uses internal weights of magnitude $1$, implementing the difference operation. In \textit{Learned Unit}, the internal motif weights are initialized to $1$ and then learned under fixed sign constraints using the proposed Hebbian scheme. In \textit{Shared Learned Unit}, the internal motif weights are also learnable and initialized to one, but a single set of internal motif weights is shared across all units within each layer. Full experimental details, including the definitions of all model variants, are provided in Appendix~\ref{app:experimental_details}.

\begin{table}[t]
\centering
\caption{
Controlled experiments on MNIST, Fashion-MNIST, and CIFAR-10. We report test accuracy (\%) averaged over 10 random seeds, with standard deviation.
}
\label{tab:controlled_results}
\begin{tabular}{lccc}
\toprule
Method & MNIST & Fashion-MNIST & CIFAR-10 \\
\midrule
Vanilla MLP & $97.666 \pm 0.066$ & $87.891 \pm 0.178$ & $54.057 \pm 0.274$ \\
Vanilla MLP $\times 2$ & $97.784 \pm 0.066$ & $88.077 \pm 0.171$ & $55.510 \pm 0.423$ \\
\midrule
On-Off BP & $97.792 \pm 0.067$ & $88.100 \pm 0.238$ & $56.432 \pm 0.372$ \\
On-Off Sym & $97.796 \pm 0.074$ & $88.147 \pm 0.229$ & $56.483 \pm 0.318$ \\
On-Off Weak Sym & $97.795 \pm 0.079$ & $87.888 \pm 0.268$ & $56.394 \pm 0.377$ \\
On-Off Asym & $95.244 \pm 0.492$ & $79.468 \pm 1.814$ & $41.707 \pm 2.071$ \\
On-Off Noisy & $97.791 \pm 0.094$ & $87.734 \pm 0.322$ & $56.370 \pm 0.578$ \\
On-Off Learned Unit & $97.765 \pm 0.054$ & $87.867 \pm 0.276$ & $56.434 \pm 0.443$ \\
On-Off Shared Learned Unit & $\mathbf{98.027 \pm 0.051}$ & $\mathbf{88.276 \pm 0.333}$ & $\mathbf{56.484 \pm 0.346}$ \\
\bottomrule
\end{tabular}
\end{table}

\paragraph{Results.}
Table~\ref{tab:controlled_results} shows that the on-off model learns effectively on all benchmarks. The symmetric Hebbian model matches backpropagation on the same architecture, with differences well within the standard deviation between seeds, supporting the theoretical analysis. The alignment variants further show that exact BU--TD symmetry is not required: both the Weak Sym and Noisy variants remain close to the symmetric and backpropagation references, indicating that approximate alignment and noisy local updates are sufficient for effective credit assignment. In contrast, initializing the pathways far from alignment harms performance, especially on CIFAR-10, suggesting that performance depends on the initial alignment regime rather than on exact equality between corresponding BU and TD weights. The model is also competitive with standard ReLU networks; on CIFAR-10, on-off Sym reaches $56.483\%$, outperforming both the matched-effective-width MLP ($54.057\%$) and the matched-neuron-count MLP ($55.510\%$).

The learned-motif variants show that the on-off circuit does not need to enforce a fixed hard difference with weights $\pm1$. The model learns effectively when the internal motif weights are themselves learned under sign constraints, starting from the unit-magnitude initialization. The Shared Learned Unit variant gives the best results on all three datasets, suggesting that sharing the learned on-off internal motif computation within a layer may provide a useful regularization effect.

\subsection{Scaling to Tiny ImageNet}

\paragraph{Setup.}
We next evaluate scalability on Tiny ImageNet \citep{le2015tiny}, a 200-class image classification benchmark that is substantially more challenging than CIFAR-10. This benchmark is a useful stress test for biologically motivated learning methods: for example, as reported in the Self-Contrastive Forward--Forward (SCFF) study, the Hebbian-based approach remains competitive on simpler benchmarks such as CIFAR-10, but scales poorly to Tiny ImageNet and suffers large performance drops \citep{chen2025self}. To compare with this recent evaluation, we follow the SCFF Tiny ImageNet experimental setting, using a similar five-convolution-layer architecture followed by a linear classifier. This yields a comparison between models with similar size and abstract structure. The on-off model is trained with the proposed local Hebbian rule in its symmetric BU-TD setting. We additionally compare against two vanilla convolutional baselines: one with the same number of effective hidden units, where each on-off unit is replaced by a standard ReLU neuron, and one with twice the number of hidden channels, matching the number of hidden neurons/channels in the on-off model. Full architectural and optimization details are provided in Appendix~\ref{app:experimental_details}.

\begin{table}[t]
\centering
\caption{
Tiny ImageNet results. We report top-1 and top-5 accuracy (\%). Results are averaged over 5 random seeds. Reference results are reported from \citep{chen2025self}, '--' means no reported results.
}
\label{tab:tinyimagenet_results}
\begin{tabular}{lcc}
\toprule
Method & Top-1 & Top-5 \\
\midrule
Hebb-based \citep{lagani2021hebbian} & -- & $37.0$ \\
DFA \citep{webster2020learning} & $32.1$ & -- \\
SCFF \citep{chen2025self} & $35.7$ & $59.8$ \\
\midrule
Vanilla Conv & $35.58 \pm 0.44$ & $60.69 \pm 0.32$ \\
Vanilla Conv $\times 2$ & $37.10 \pm 0.57$ & $62.64 \pm 0.33$ \\
On-Off (Ours)& $\mathbf{42.31 \pm 0.41}$& $\mathbf{67.70 \pm 0.31}$ \\
\bottomrule
\end{tabular}
\begin{flushleft}
\end{flushleft}
\end{table}

\paragraph{Results.}
Table~\ref{tab:tinyimagenet_results} reports final-epoch test accuracy, showing that the on-off model scales beyond the small scale fully connected setting. On the same five-convolution-layer scale as SCFF, on-off Sym reaches $42.31\%$ top-1 and $67.70\%$ top-5 accuracy, outperforming SCFF by a large margin. It also outperforms both vanilla convolutional baselines: the matched-effective-width model reaches $35.58\%$ top-1, and the matched-neuron-count $\times 2$ model reaches $37.10\%$. Thus, the improvement cannot be explained simply by the doubled-channel representation of the on-off units: the on-off model outperforms a vanilla network trained without biological constraints even when the two models have the same number of hidden neurons/channels. This suggests that the paired-channel architecture provides an efficient representation relative to vanilla networks of comparable size. To further analyze this representation, Appendix~\ref{app:activation_visualization} visualizes the learned activations of opposing on-off channels.


\section{Discussion}
\label{sec:discussion}
 
We introduced an on-off architecture for biologically constrained learning, where neural activations and learning signals are represented through paired non-negative channels. This provides a concrete mechanism for a long-standing issue in biologically plausible credit assignment: error signals must be able to increase or decrease synaptic weights, yet biological neurons communicate through non-negative activity and obey excitatory/inhibitory constraints. In our model, two TD populations play opposing instructive roles, with one driving Hebbian increases and the other driving decreases of synaptic weights. This is in line with biological observations of on-off and opponent channels in sensory systems in low-level parts of the visual system \citep{awang1962receptive, schiller1992and}, as well as in higher-level, e.g., in face and expression perception \citep{valentine1991unified, skinner2010anti}.

The model is also in line with theoretical proposals that feedback may induce error-like differences between neural activities \citep{lillicrap2020backpropagation}, and recent evidence for vectorized instructive signals in cortical dendrites \citep{francioni2026vectorized}. The structure of the model is relatively simple: the same on-off motif is repeated across the network and used in both BU and TD streams, composing signed signals using a two-channel circuit-level representation based on excitatory and inhibitory interactions. The experiments suggest that this construction is not merely a costly doubling of network neurons. Across benchmarks, the on-off model outperforms vanilla networks of comparable size, trained with both biologically inspired methods and backpropagation. These results suggest that paired $\mathrm{ReLU}^{+}/\mathrm{ReLU}^{-}$ channels can provide an efficient representation for learning in biological networks and possibly also in artificial models.

\paragraph{Limitations:} 
The current model focuses on one important  component, but it is still far from providing a full biologically plausible model for cortical learning, and significant research is still required. A second limitation is that the on-off model requires a substantial increase in the number of neurons, compared with models that use a single top-down network rather than two parallel networks used by the current model. However, it is consistent with the experimental estimate that the number of top-down cortical connections in primates is about twice that of bottom-up ones \citep{markov2014anatomy} and, as shown by the model, the two-network architecture can increase the model's performance.

\bibliographystyle{unsrtnat}
\bibliography{main}


\newpage

\appendix

\section{Technical appendices and supplementary material}

\subsection{Complete on-off model connectivity}
\label{app:model_details}

This appendix provides a complete schematic description of the connectivity and computational flow of the on-off model. The main text presents the basic on-off motif and the local update rule in simplified form. Here we describe how bottom-up motifs, top-down motifs, gating connections, and inter-layer weights are arranged in the full model.

\subsubsection{Bottom-up and top-down motifs}

Figure~\ref{fig:app_bu_td_overview} shows the full organization of two consecutive on-off layers. Each layer contains a single bottom-up (BU) on-off motif, through which feedforward activity propagates upward. For every BU motif, the model includes two corresponding top-down (TD) motifs: one associated with the BU On channel and one associated with the BU Off channel. These TD motifs propagate feedback signals downward.

Each TD motif is itself composed of two non-negative channels, corresponding to two TD populations of positive and negative synaptic updates. In the figure, the positive-update population is shown in blue and the negative-update population in orange. Thus, while every BU motif contains two output channels, its corresponding TD structure contains four channels, derived from two TD motifs. This organization allows the model to represent signed feedback signals using only non-negative neural activity.

\begin{figure}[ht]
    \centering
    \includegraphics[width=0.4\linewidth]{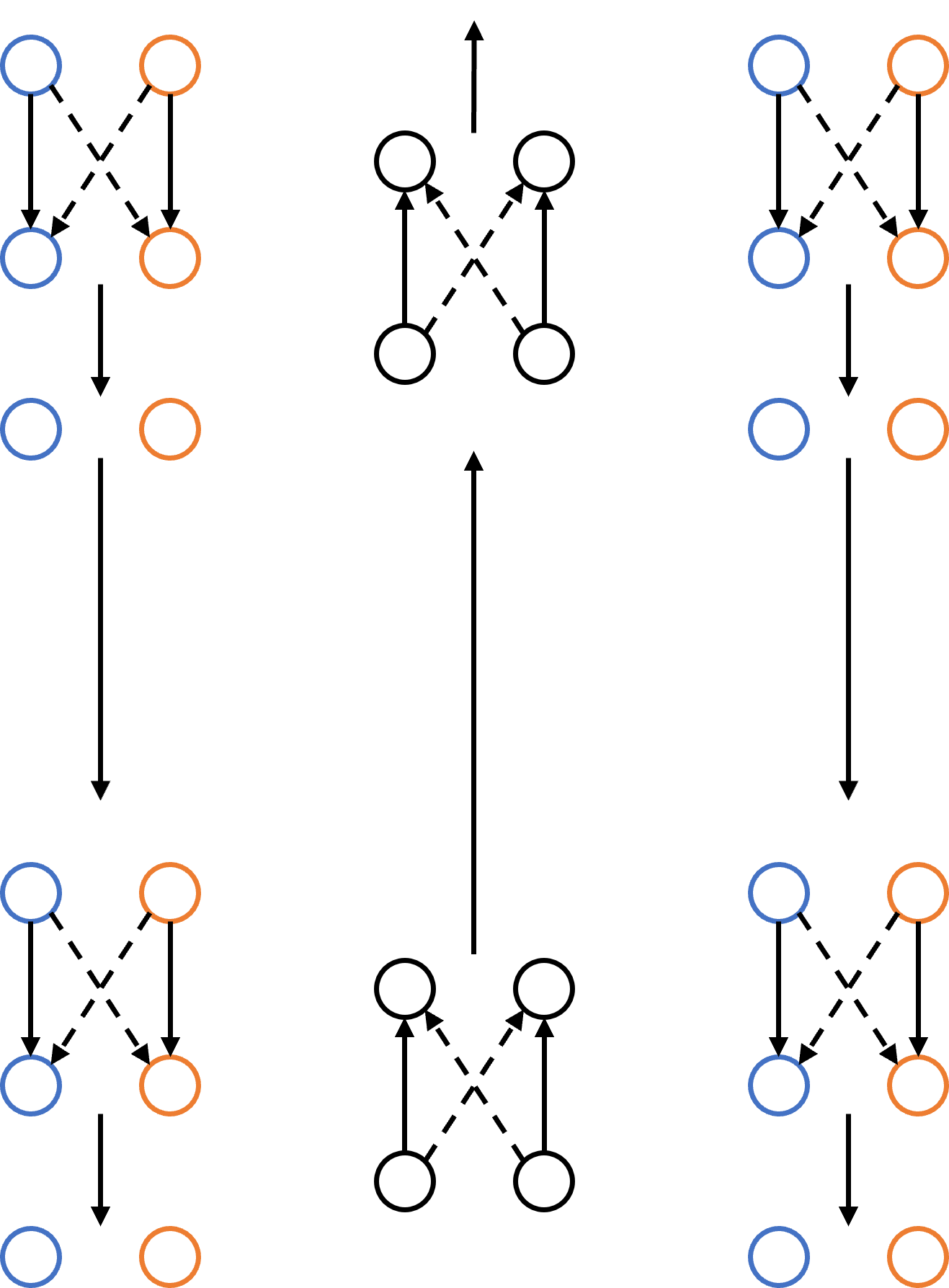}
    \caption{
    \textbf{Bottom-up and top-down motif organization.}
    The figure shows two consecutive on-off layers, each containing one bottom-up (BU) on-off motif and two corresponding top-down (TD) motifs. BU activity flows upward through the network. For each BU motif, one TD motif corresponds to the BU On channel and the other to the BU Off channel. Each TD motif contains two non-negative feedback channels: a positive-update population (blue) and a negative-update population (orange). TD activity flows downward and provides the feedback signals used for local Hebbian updates.
    }
    \label{fig:app_bu_td_overview}
\end{figure}

\subsubsection{Bottom-up connectivity}

The BU computation is illustrated in Figure~\ref{fig:app_bu_flow}. Within each on-off motif, the two output neurons are passed through the on-off rectifying nonlinearities, denoted by $\mathrm{ReLU}^{+}$ and $\mathrm{ReLU}^{-}$. As a result, at most one output neuron of each BU motif is active at a time. The active output channel represents the sign and magnitude of the encoded scalar quantity.

Inter-layer BU connectivity is applied over the individual output channels of the previous layer and the individual input neurons of the next layer. Thus, if a layer contains $n$ on-off motifs, it produces $2n$ non-negative output neurons, and a fully connected BU layer maps these $2n$ neurons to the input neurons of the motifs in the next layer. In this sense, the network preserves the standard fully connected structure over neurons, while grouping pairs of neurons into on-off motifs.

\begin{figure}[t]
    \centering
    \includegraphics[width=0.4\linewidth]{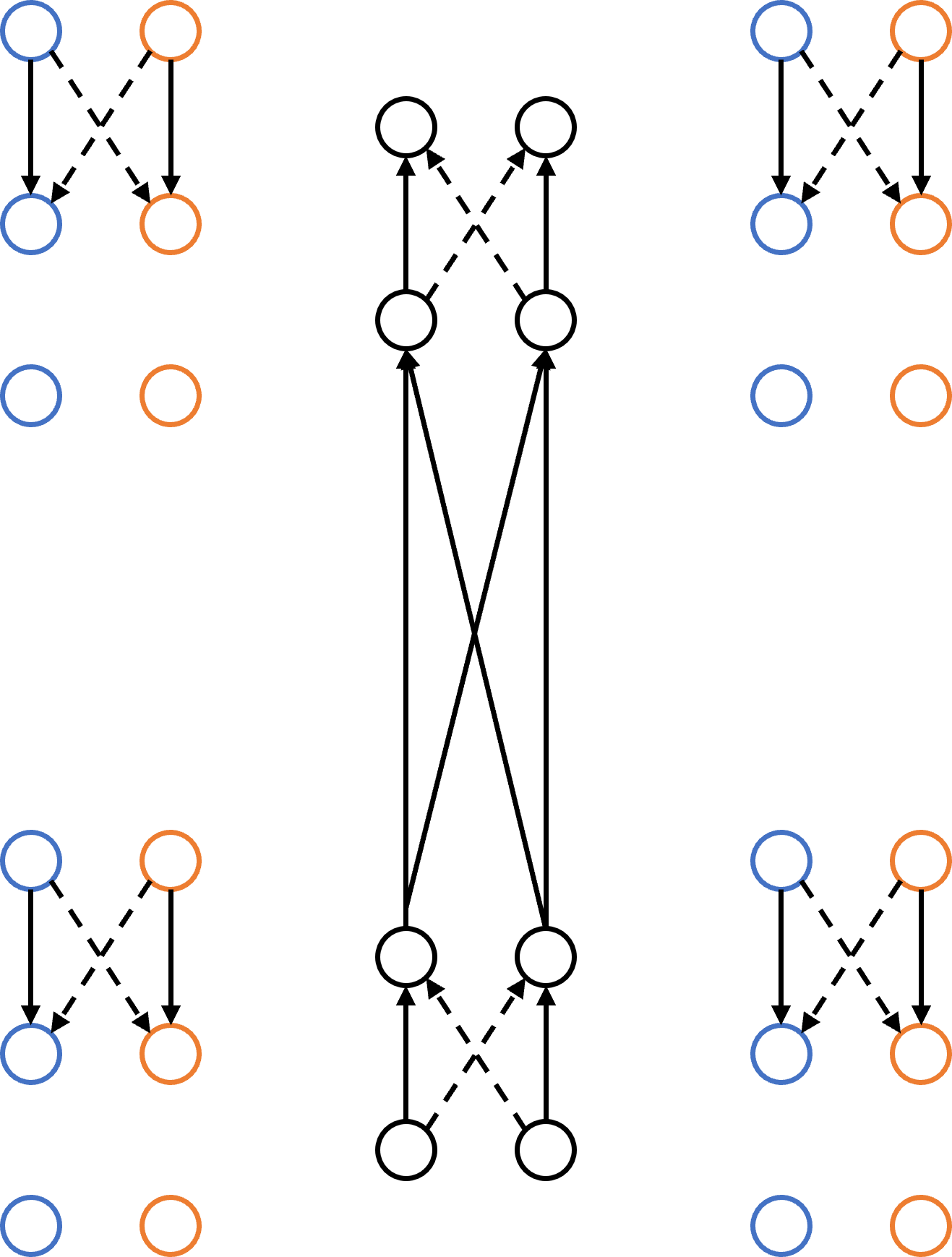}
    \caption{
    \textbf{Bottom-up connectivity and flow.}
    The BU stream propagates upward through consecutive on-off motifs. Each motif applies $\mathrm{ReLU}^{+}$ and $\mathrm{ReLU}^{-}$ nonlinearities, so that at most one output channel is active. The two output neurons of a lower-layer motif are connected to the input neurons of motifs in the next layer through standard inter-layer connectivity. Thus, inter-layer BU weights operate over individual non-negative channels, while pairs of channels jointly form on-off motifs.
    }
    \label{fig:app_bu_flow}
\end{figure}

\subsubsection{Top-down connectivity and gating}
\label{app:td_flow}

Figure~\ref{fig:app_td_flow} shows the TD connectivity. Although the full diagram is dense, the computation consists of four simple components.

First, TD propagation is gated by the corresponding BU activity. Since each BU motif has at most one active output channel, the TD motif corresponding to the inactive BU channel is suppressed. This gating operation is implemented through lateral BU--TD connections, shown by dashed colored arrows in the figure. Functionally, this operation plays the same role as the derivative of a rectifying nonlinearity: feedback is propagated only through the channel that participated in the BU computation.

Second, each TD motif applies the same on-off motif computation as the BU stream. The motif computes a weighted difference between its non-negative input channels and applies the on-off rectifying nonlinearities. Therefore, TD signals are also represented by paired non-negative channels rather than signed neurons.

Third, the two TD motifs associated with the same BU motif interact through cross-channel connectivity. The figure includes four intermediate TD neurons between the upper-layer TD motifs and the lower-layer TD inputs. These neurons implement the cross-channel structure: signals can cross both between the On and Off BU-associated motifs and between the positive-update and negative-update TD populations. This cross-channel connectivity is what allows signed error information to be represented as the difference between paired non-negative TD channels.

Fourth, the intermediate TD neurons are connected to the inputs of the next lower TD layer through inter-layer TD connectivity. The positive-update TD population is connected to the corresponding positive-update population in the next layer, and the negative-update population is connected analogously. Consequently, the model contains three inter-layer weight sets: one for the BU stream, one for the positive-update TD stream, and one for the negative-update TD stream. Although the TD stream contains twice as many neurons as the BU stream, the positive and negative TD populations are connected separately; therefore, each TD inter-layer weight matrix has the same shape as the BU inter-layer weight matrix, up to transposition.

\begin{figure}[t]
    \centering
    \includegraphics[width=0.4\linewidth]{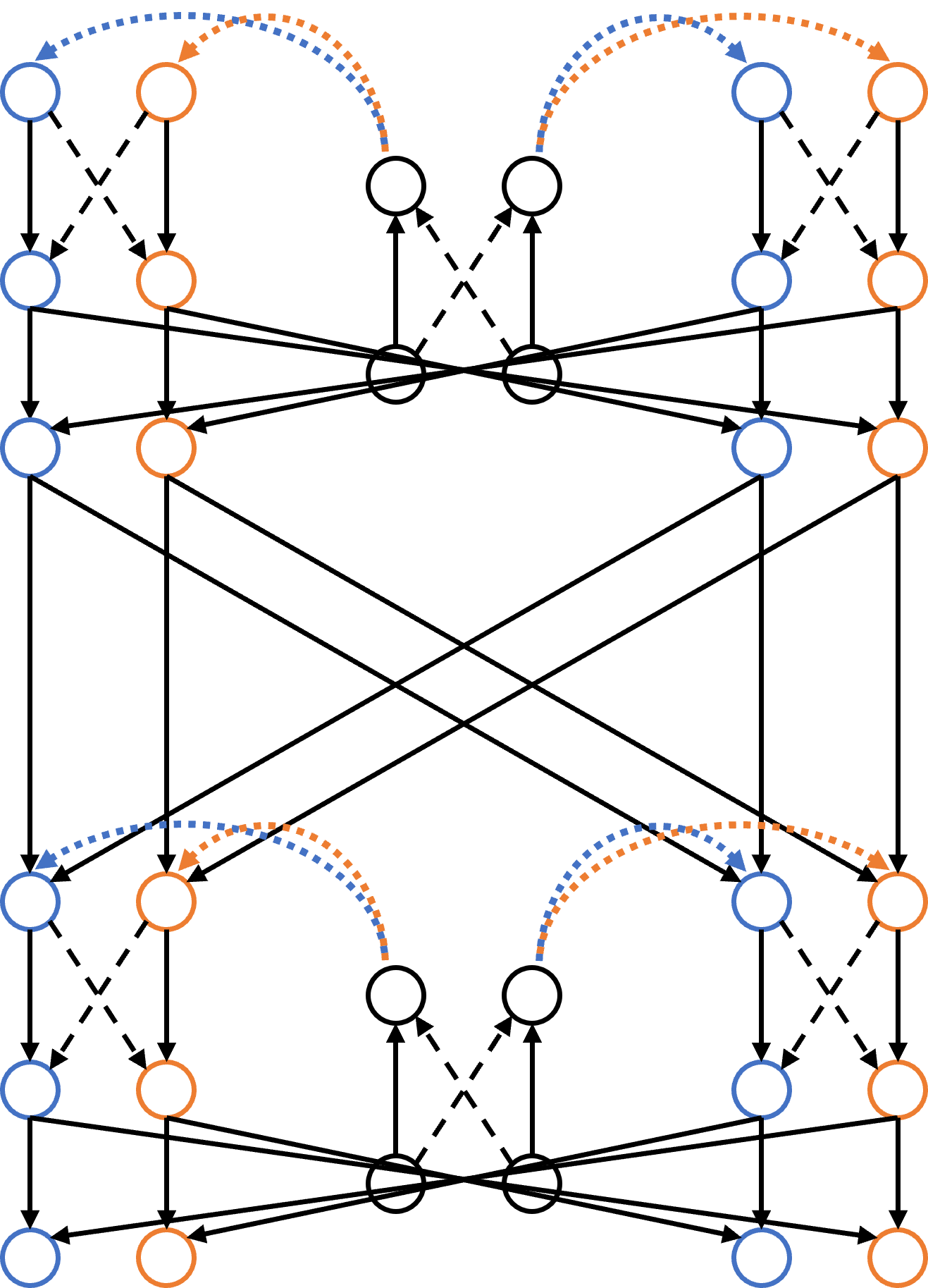}
    \caption{
    \textbf{Top-down connectivity and flow.}
    The TD stream propagates feedback downward through two TD motifs associated with each BU motif. The computation consists of four operations. First, lateral BU--TD connections gate TD activity according to the active BU channel. Second, each TD motif applies an on-off transformation, representing signed feedback through paired non-negative channels. Third, cross-channel TD connectivity links the two TD motifs associated with the same BU motif, allowing feedback to cross between On/Off-associated motifs and between positive-update and negative-update populations. Fourth, inter-layer TD connectivity propagates feedback to the next lower layer, separately for the positive-update and negative-update populations. This yields separate inter-layer weight sets for the BU stream, the positive-update TD stream, and the negative-update TD stream.
    }
    \label{fig:app_td_flow}
\end{figure}

\subsubsection{Computational flow of learning}

The full learning algorithm consists of three stages. First, the network performs a BU pass. The input is propagated upward through the on-off motifs, and each motif produces a pair of non-negative output channels. Due to the on-off nonlinearities, at most one channel in each motif is active.

Second, the network performs a TD pass. The output error is encoded into positive-update and negative-update TD channels and propagated downward through the TD stream. At each layer, lateral BU--TD gating restricts feedback to the TD motif corresponding to the active BU channel. The TD stream therefore carries feedback signals that are consistent with the BU computation while representing signed error information through paired non-negative channels.

Third, all weights are updated using a local Hebbian rule. For each inter-layer BU weight, the update depends on the product of the presynaptic BU activity and the relevant postsynaptic TD feedback channels. Activity in the positive-update TD channel contributes a Hebbian increase, whereas activity in the negative-update TD channel contributes a Hebbian decrease. The effective update is therefore proportional to the difference between paired TD channels. The corresponding TD weights are updated analogously, producing matched updates across BU and TD pathways. The same local learning principle is also applied to the intra-motif and intermediate TD connections, so all learnable synapses in the model are updated through local Hebbian interactions. 

The overall learning flow is summarized as:
\begin{enumerate}
    \item \textbf{Bottom-up pass:} propagate the input through BU on-off motifs.
    \item \textbf{Top-down pass:} propagate feedback through gated TD on-off motifs using positive-update and negative-update channels.
    \item \textbf{Local update:} update BU and TD weights using Hebbian products of local neural activities.
\end{enumerate}

\subsection{Experimental details}
\label{app:experimental_details}

All experiments were run on a single NVIDIA A10 GPU.

\subsubsection{Controlled experiments}

For the controlled experiments on MNIST, Fashion-MNIST, and CIFAR-10, we used two-layer fully connected networks. For MNIST and Fashion-MNIST, the on-off network contained two hidden layers with 250 on-off units per layer. For CIFAR-10, the network contained two hidden layers with 500 on-off units per layer. Since each on-off unit consists of two non-negative channels, these correspond to 500 and 1000 scalar hidden channels per layer, respectively.

All controlled experiments used threshold $\theta=0$ for the $\mathrm{ReLU}^{\pm}$ activation, included bias terms, and used Cross-Entropy loss. The internal on-off motif used fixed unit-magnitude difference weights unless evaluating the learnable motif variants. Training used stochastic gradient descent with learning rate $10^{-2}$, batch size 64, no weight decay, and no learning-rate decay. MNIST and Fashion-MNIST models were trained for 20 epochs, while CIFAR-10 models were trained for 50 epochs. We did not perform extensive hyperparameter tuning. Results are reported as test accuracy at the best epoch during training, averaged over 10 random seeds.

For vanilla baselines, we replaced the on-off units with standard ReLU neurons and trained using standard backpropagation. We used two baseline widths: one matching the number of effective hidden units in the on-off model, and one with twice the width, matching the number of scalar hidden neurons/channels.

\paragraph{Bottom-up/top-down alignment variants.}
The \textit{Sym} setting initializes corresponding BU and TD weights in exact alignment and applies matched Hebbian updates. The \textit{Weak Sym} setting initializes TD weights as noisy copies of the corresponding BU weights, using an independent elementwise Gaussian noise with standard deviation $0.01$. The \textit{Noisy} setting uses weakly symmetric initialization and adds independent elementwise noise to the TD updates, also with standard deviation $0.01$. The \textit{Asym} setting initializes the relevant BU and TD weight matrices independently at random; at the widths used here, such random high-dimensional weight vectors are typically close to orthogonal, yielding pathways far from alignment. The \textit{BP} reference trains the same on-off architecture directly with backpropagation.

\paragraph{on-off motif variants.}
In the default setting, the internal motif weights are fixed to unit magnitude, implementing the idealized difference operation. In the \textit{Learned Unit} setting, the internal motif weights are initialized to one and learned independently for each unit while preserving their signs. In the \textit{Shared Learned Unit} setting, the internal motif weights are initialized to one and learned, but a single set of motif weights is shared across all units within a layer.

\subsubsection{Tiny ImageNet experiments}

For Tiny ImageNet, we used a convolutional architecture following the scale of the Self-Contrastive Forward--Forward Tiny ImageNet setup. The model was trained on Tiny ImageNet-200 images of size $64\times64$. We report both top-1 and top-5 accuracy, averaged over 5 random seeds.

The on-off convolutional network consists of five convolutional stages followed by a linear output layer. Each convolutional stage applies a convolution over the current on-off channel representation, followed by the on-off motif and $\mathrm{ReLU}^{\pm}$ activation with threshold $\theta=0$. Max pooling is applied after stages 1, 2, and 5. The architecture is summarized in Table~\ref{tab:tiny_architecture}. The number of on-off pairs matches the corresponding vanilla SCFF-style architecture, while the actual number of channels is doubled due to the on-off representation.

\begin{table}[t]
\centering
\caption{
Tiny ImageNet convolutional architectures. The on-off model uses the same number of effective channel pairs as the vanilla SCFF-style architecture, but represents each channel by an on-off pair, doubling the actual number of scalar channels. Max pooling is applied after stages 1, 2, and 5.
}
\label{tab:tiny_architecture}
\begin{tabular}{lcccc}
\toprule
Stage & Kernel & Pool & Vanilla channels & on-off actual channels \\
\midrule
Conv1 & $5\times5$, pad 2 & yes & $3 \rightarrow 64$ & $6 \rightarrow 128$ \\
Conv2 & $3\times3$, pad 1 & yes & $64 \rightarrow 192$ & $128 \rightarrow 384$ \\
Conv3 & $3\times3$, pad 1 & no & $192 \rightarrow 384$ & $384 \rightarrow 768$ \\
Conv4 & $3\times3$, pad 1 & no & $384 \rightarrow 256$ & $768 \rightarrow 512$ \\
Conv5 & $3\times3$, pad 1 & yes & $256 \rightarrow 256$ & $512 \rightarrow 512$ \\
\bottomrule
\end{tabular}
\end{table}

For the vanilla convolutional baselines, each on-off unit is replaced by a standard ReLU channel. The matched-effective-width vanilla baseline uses the vanilla channel counts shown in Table~\ref{tab:tiny_architecture}. The matched-neuron-count baseline doubles these channel counts: $3\rightarrow128$, $128\rightarrow384$, $384\rightarrow768$, $768\rightarrow512$, and $512\rightarrow512$.

The on-off Tiny ImageNet model was trained in the symmetric BU/TD setting using the proposed local Hebbian rule. We used Cross-Entropy loss,    AdamW with learning rate $2.5\times10^{-4}$, weight decay $10^{-5}$, batch size 256, and 25 epochs. The learning rate followed a cosine schedule with warmup: warmup fraction $0.1$, peak learning-rate multiplier $1.1$, and final learning-rate multiplier $0.1$.

\subsection{Learned on-off representations}
\label{app:activation_visualization}

To further inspect the representations learned by the on-off architecture, we visualize activation maps from the convolutional model trained on Tiny ImageNet. We follow the activation-maximization visualization approach of \citet{zeiler2014visualizing}.  We focus on two representative on-off units from the second convolutional layer. For each unit, we score every image in the Tiny ImageNet test set by the maximum spatial activation of each channel:
\begin{equation}
    s_k^{+}(x) = \max_{h,w} h_k^{+}(x,h,w),
    \qquad
    s_k^{-}(x) = \max_{h,w} h_k^{-}(x,h,w),
\end{equation}
where $h_k^{+}$ and $h_k^{-}$ denote the post-$\mathrm{ReLU}^{\pm}$ On and Off activation maps. We then select the top 8 images for the On channel and the top 8 images for the Off channel, forming a set of 16 images per unit. For each selected image, we display both the On and Off activation maps together with the original image.

The normalized map is overlaid on the original image using a heatmap, so bright regions indicate where the corresponding channel responded most strongly. Because maps are normalized separately, the visualization indicates spatial selectivity within each image, rather than absolute activation magnitude across images or channels.

Figure~\ref{fig:on_off_activation_visualization} shows two representative examples of learned on-off units. The first unit exhibits a boundary-sensitive representation. Across the selected images, the On and Off channels respond to complementary parts of object boundaries. This behavior arises because the two channels are associated with opposite effective kernels: a visual pattern that produces a positive response in one channel produces the complementary response in the other. As a result, one channel tends to activate on one side or polarity of an edge, while the opposite channel activates on the reversed edge polarity. For example, when an object boundary contains a transition in one direction, the On channel may highlight that transition. Commonly the opposite transition occurs on the other whereas the Off channel highlights the opposite transition elsewhere in the image. This arises from basic properties of natural images where often a transition from background visual semantics to object are symmetric across spatial locations of the object. Together, the two channels provide a richer description of object shape than either channel alone, because the pair can represent both directions of local contrast within the same unit.

The second unit illustrates a different form of complementarity, related to color and foreground--background contrast. Here too, the paired \(\mathrm{ReLU}^{+}/\mathrm{ReLU}^{-}\) channels learn opposite effective kernels. One channel often responds strongly to regions associated with the object or foreground, especially where color or texture differs from the surrounding image context. The opposite channel tends to emphasize complementary regions, such as background structure or regions with the opposite color/contrast relationship. Thus, the pair separates complementary visual evidence along the same learned feature dimension, rather than simply duplicating the same detector twice.

These qualitative examples support the view that paired on-off channels are not redundant copies. Instead, the \(\mathrm{ReLU}^{+}/\mathrm{ReLU}^{-}\) construction encourages each unit to learn a feature together with its opposite-sign counterpart, yielding complementary responses that jointly encode a more complete visual feature: opposite edge polarity in the first example, and complementary color or foreground--background structure in the second. Importantly, this structure emerges from the on-off motif and the paired rectifying channels without explicit supervision encouraging complementary channel behavior. This is in line with biological observations of on-off and opponent channels in sensory systems in low-level parts of the visual system \citep{awang1962receptive, schiller1992and}, as well as in higher-level, e.g., in face and expression perception \citep{valentine1991unified, skinner2010anti}.

\begin{figure}[t]
  \centering
  \includegraphics[width=0.95\linewidth]{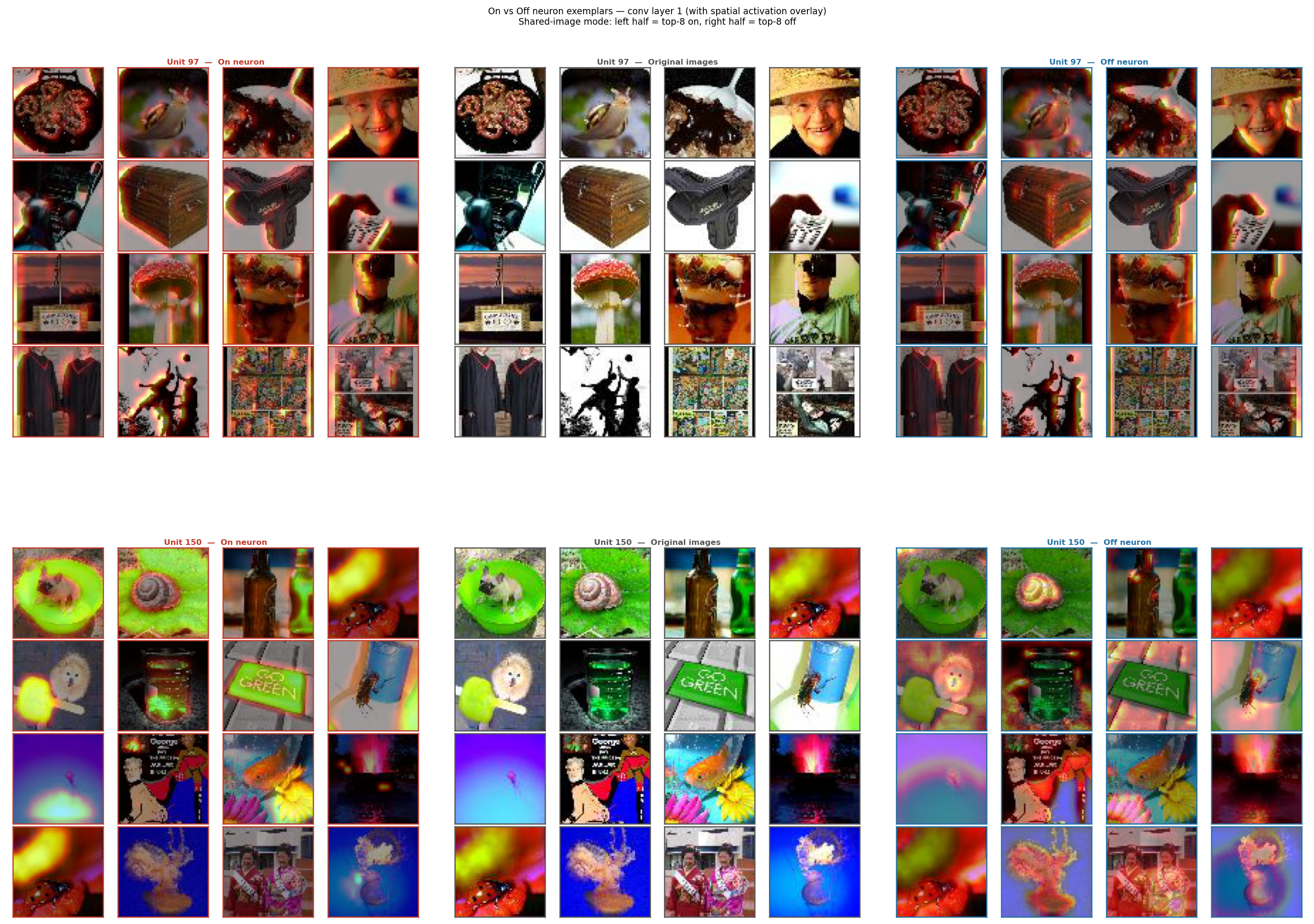}
  \caption{
  \textbf{Learned complementary on-off activations.}
  Activation maps from two representative on-off units sampled from the second convolutional layer of the Tiny ImageNet model. Activation maps are visualized using the activation-based method of \citet{zeiler2014visualizing}. For each unit, images are selected as the union of the top 8 test images ranked by maximum On-channel activation and the top 8 test images ranked by maximum Off-channel activation. For every selected image, the original image is shown together with both corresponding activation maps, allowing comparison between the On and Off responses for the same input. Activation maps are post-$\mathrm{ReLU}^{\pm}$ responses, bilinearly upsampled to the input resolution, and normalized independently per image for visualization.
  \textbf{Top:} A boundary-sensitive unit, where the paired channels respond to complementary edge or boundary patterns.
  \textbf{Bottom:} A color- or contrast-sensitive unit, where the paired channels respond to complementary object and background structure.
  These examples illustrate that paired on-off channels can learn complementary visual representations rather than redundant duplicated features.
  }
  \label{fig:on_off_activation_visualization}
\end{figure}

\subsection{Full proof of gradient recovery}
\label{app:gradient_recovery}

In this section, we show that the proposed on-off model can recover the backpropagation weight update while using only non-negative neuronal activities and sign-constrained synapses. The key idea is that both activations and error signals are represented by pairs of non-negative channels, whose difference corresponds to the effective signed quantity. The proof is stated for a fully connected network with ReLU activations, but the argument extends directly to other architectures whose layers can be written as linear maps followed by element-wise nonlinearities.

\paragraph{Notation}

For any signed vector $v$, define its positive and negative parts by
\begin{equation*}
    v^{+} = \max(v,0), 
    \qquad 
    v^{-} = \max(-v,0),
\end{equation*}
so that
\begin{equation*}
    v = v^{+} - v^{-},
    \qquad 
    v^{+},v^{-} \geq 0.
\end{equation*}
The on-off architecture represents signed quantities using such non-negative channel pairs.

We consider a feedforward network with effective signed weights $W_l$, pre-activations $z_l$, and activations $h_l$:
\begin{equation*}
    z_l = W_l h_{l-1},
    \qquad
    h_l = \sigma(z_l),
\end{equation*}
where $\sigma$ is ReLU, and $W_l$ is a sign-constrained synaptic weight matrix.

\paragraph{Backpropagation} 

Backpropagation computes signed error signals recursively. Let $\epsilon_l$ denote the conventional backpropagation error at layer $l$, corresponding to the derivative of the loss with respect to the pre-activation $z_l$:
\begin{equation*}
    \epsilon_l = \frac{\partial L}{\partial z_l}.
\end{equation*}
Then
\begin{equation*}
    \epsilon_{l-1}
    =
    D_{l-1} W_l^T \epsilon_l,
    \label{eq:app_bp_recursion}
\end{equation*}
where
\begin{equation*}
    D_{l-1} = \mathrm{diag}\big(\sigma'(z_{l-1})\big)
\end{equation*}
is the ReLU derivative gate. The gradient with respect to the weights is
\begin{equation}
    \frac{\partial L}{\partial W_l}
    =
    \epsilon_l h_{l-1}^T.
    \label{eq:app_bp_gradient}
\end{equation}

For gradient descent, the desired update is therefore
\begin{equation*}
    \Delta W_l
    =
    -\eta \epsilon_l h_{l-1}^T.
    \label{eq:app_gd_update}
\end{equation*}

Equivalently, define the descent signal
\begin{equation*}
    \delta_l = -\epsilon_l.
\end{equation*}
Then the desired update can be written as
\begin{equation*}
    \Delta W_l = \eta \delta_l h_{l-1}^T.
    \label{eq:app_descent_update}
\end{equation*}

\paragraph{Non-negative representation of error signals}

The on-off model represents each signed descent signal $\delta_l$ using two non-negative top-down channels:
\begin{equation*}
    \delta_l
    =
    \bar{\partial}_l - \underline{\partial}_l,
    \qquad
    \bar{\partial}_l,\underline{\partial}_l \geq 0.
    \label{eq:app_delta_decomp}
\end{equation*}
Here, $\bar{\partial}_l$ and $\underline{\partial}_l$ denote the positive and negative top-down channels, respectively. Importantly, neither channel contains negative neuronal activity; the signed signal exists only as their effective difference.

At the output layer, the channels are initialized from the descent signal:
\begin{equation*}
    \bar{\partial}_L = \max(\delta_L,0),
    \qquad
    \underline{\partial}_L = \max(-\delta_L,0),
\end{equation*}
which ensures that
\begin{equation*}
    \bar{\partial}_L - \underline{\partial}_L = \delta_L.
\end{equation*}

\paragraph{Top-down propagation through on-off channels}

We now show that if the top-down pathway uses the corresponding on-off structure, then the effective difference between the two non-negative top-down channels follows the same recursion as backpropagation.

While the forward weight matrix is $W_l$, the top-down pathway uses different sets of weights: $W_l^+$ for the channel that contributes positive updates, and $W_l^-$ for the second top-down channel. The positive top-down channel receives contributions from positive-to-positive routes, while the negative top-down channel receives contributions from negative-to-negative routes:
\begin{align}
    \bar{\partial}_{l-1}
    &=
    D_{l-1}
    \left(
        W_l^{+T} \bar{\partial}_l
    \right),
    \label{eq:app_td_pos}
    \\
    \underline{\partial}_{l-1}
    &=
    D_{l-1}
    \left(
        W_l^{-T} \underline{\partial}_l
    \right).
    \label{eq:app_td_neg}
\end{align}
All terms in these equations are non-negative. The matrix $D_{l-1}$ implements the gating induced by the corresponding bottom-up activity. For ReLU, this gate is active exactly when the corresponding bottom-up unit is active, and therefore plays the same role as the ReLU derivative in backpropagation.

Taking the difference between the two channels gives:
\begin{align*}
    \bar{\partial}_{l-1} - \underline{\partial}_{l-1}
    &=
    D_{l-1}
    \left(
       W_l^{+T} \bar{\partial}_l
        -
        W_l^{-T} \underline{\partial}_l
    \right)
\end{align*}
Assuming a symmetric connectivity, where $W_l = W_l^{+T} = W_l^{-T}$, we get:
\begin{align*}
    \bar{\partial}_{l-1} - \underline{\partial}_{l-1}
    &=
    D_{l-1}
    \left(
       W_l^{+T} \bar{\partial}_l
        -
        W_l^{-T} \underline{\partial}_l
    \right)
    \\
    &=
    D_{l-1}
    W_l
    \left(
        \bar{\partial}_l - \underline{\partial}_l
    \right)
    \\
    &=
    D_{l-1} W_l^T \delta_l.
\end{align*}

Using the definition
\begin{equation*}
    \delta_{l-1}
    =
    \bar{\partial}_{l-1} - \underline{\partial}_{l-1},
\end{equation*}
we obtain
\begin{equation}
    \delta_{l-1}
    =
    D_{l-1} W_l^T \delta_l.
    \label{eq:app_our_recursion}
\end{equation}

Equation~\eqref{eq:app_our_recursion} is exactly the backpropagation recursion for the descent signal. Thus, although every top-down channel is non-negative, their difference propagates the same signed information as backpropagation.

\paragraph{Equivalence of the local update to gradient descent}

We now show that the local learning rule recovers the gradient descent update. The Hebbian update for an effective synapse uses the presynaptic bottom-up activity and the two corresponding top-down postsynaptic signals, from the positive and negative channels:

\begin{equation*}
    \Delta W_l
    =
    \eta \bar{\partial}_l h_{l-1}^T -\eta  \underline{\partial}_l
    h_{l-1}^T    \\ 
    =
    \eta
    \left(
        \bar{\partial}_l - \underline{\partial}_l
    \right)
    h_{l-1}^T.
\end{equation*}
Since the on-off channels represent
\begin{equation*}
    \delta_l = \bar{\partial}_l - \underline{\partial}_l,
\end{equation*}
this can be written as

\begin{equation*}
    \Delta W_l
    =
    \eta
    \delta_l
    h_{l-1}^T.
\end{equation*}

Substituting $\delta_l = -\epsilon_l$, where $\epsilon_l = \partial L / \partial z_l$, gives
\begin{equation*}
    \Delta W_l
    =
    -\eta
    \epsilon_l h_{l-1}^T.
\end{equation*}
Using Equation~\eqref{eq:app_bp_gradient}, we obtain
\begin{equation*}
    \Delta W_l
    =
    -\eta
    \frac{\partial L}{\partial W_l}.
\end{equation*}
Therefore, the local Hebbian update recovers the gradient descent update.

\paragraph{Theorem}

\begin{theorem}
Consider a feedforward ReLU network implemented using on-off units, with weights $W_l$. Assume that the top-down pathway uses the transposed on-off channel structure in Equations~\eqref{eq:app_td_pos}--\eqref{eq:app_td_neg}, and that the bottom-up activity gates the top-down propagation according to the ReLU derivative. Then the effective top-down signal
\begin{equation}
    \delta_l = \bar{\partial}_l - \underline{\partial}_l
\end{equation}
satisfies the same recursion as backpropagation. Consequently, the local Hebbian update recovers the gradient descent update for the effective weights.
\end{theorem}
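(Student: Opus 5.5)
The proof will be an induction on depth, running from the output layer downward, with the effective signal $\delta_l = \bar{\partial}_l - \underline{\partial}_l$ as the inductive quantity, followed by a one-line substitution into the Hebbian rule. Throughout, I take the symmetric-connectivity hypothesis in the form $W_l^{+} = W_l^{-} = W_l$, so that the top-down maps in Equations~\eqref{eq:app_td_pos}--\eqref{eq:app_td_neg} apply $W_l^{T}$. This is the reading under which the pathway is the ``transposed on-off channel structure''; the version written in the derivation above contains a transpose typo, and I will state the convention explicitly so the dimensions match. I will also take the gate $D_{l-1}$ to be exactly $\mathrm{diag}(\sigma'(z_{l-1}))$, as the statement assumes. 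In the on-off implementation this is realised by lateral gating from whichever BU channel is active, and it coincides with the ReLU derivative because $\theta=0$ and at most one channel fires.

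\textbf{Base case.} At the top layer, $\bar{\partial}_L = \max(\delta_L,0)$ and $\underline{\partial}_L = \max(-\delta_L,0)$. Their difference is $\delta_L = -\epsilon_L = -\partial L/\partial z_L$, which matches the backpropagation descent signal.

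\textbf{Inductive step.} Assume $\bar{\partial}_l - \underline{\partial}_l = -\epsilon_l$. Subtracting Equation~\eqref{eq:app_td_neg} from Equation~\eqref{eq:app_td_pos} and using that $D_{l-1}$ and $W_l^{T}$ are linear maps gives
\begin{equation*}
\bar{\partial}_{l-1} - \underline{\partial}_{l-1} = D_{l-1} W_l^{T}\bigl(\bar{\partial}_l - \underline{\partial}_l\bigr) = -D_{l-1} W_l^{T}\epsilon_l = -\epsilon_{l-1}.
\end{equation*}
The last equality is the chain rule: $z_l = W_l\sigma(z_{l-1})$, so $\partial L/\partial z_{l-1} = \mathrm{diag}(\sigma'(z_{l-1}))\,W_l^{T}\,\partial L/\partial z_l$. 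The argument never uses that the individual channels equal the positive and negative parts of $\delta_{l-1}$. It needs only that each channel stays non-negative, which holds because $D_{l-1}$, $W_l^{\pm}$ and the inputs are all non-negative. Any common-mode component carried in both channels cancels in the difference. Finally, substituting into the Hebbian rule gives $\Delta W_l = \eta(\bar{\partial}_l - \underline{\partial}_l)h_{l-1}^{T} = -\eta\,\epsilon_l h_{l-1}^{T} = -\eta\,\partial L/\partial W_l$, by Equation~\eqref{eq:app_bp_gradient}.

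\textbf{Main obstacle.} The linear algebra is trivial; the real difficulty is justifying that the on-off network really is a ReLU network with \emph{effective signed} weights $W_l$, and that the same $W_l$ appears in both TD channels. Because inter-layer synapses are non-negative, I would define the effective weight on the signed representation $h = h^{+} - h^{-}$ through the motif difference: the drive to the On input minus the drive to the Off input. I would then check that with unit-magnitude motifs and $\theta=0$, the outputs $y^{\pm} = \mathrm{ReLU}^{\pm}(d)$ are exactly the positive and negative parts of $\mathrm{ReLU}$-style signed units. I would also check that the BU and TD cross-channel wiring routes feedback so that its difference acts by this same effective matrix. If an exact channel-level identification proves awkward, I would fall back on proving the theorem at the level of effective weights, as stated, and remark that gating at the channel level leaves the difference unchanged. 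The extension to convolutional and residual layers follows because only linearity of the layer map is used.
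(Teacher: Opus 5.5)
This is the paper's proof: the output-layer initialization gives the base case, subtracting the two gated recursions with $W_l^{+}=W_l^{-}=W_l$ gives $D_{l-1}W_l^{T}\delta_l$ by linearity, and substituting into the Hebbian rule gives $-\eta\,\partial L/\partial W_l$ (your transpose fix is right, since $W_l^{+}=W_l^{-}=W_l$ is the convention the paper's theorem proof itself uses). Your side remarks are not needed and are slightly off: the recursion uses only linearity, and channel non-negativity does not follow once $W_l^{\pm}$ is the signed effective $W_l$ (the paper asserts non-negativity just as loosely); likewise, a channel-level identification must treat $h^{+}$ and $h^{-}$ as separate ReLU units rather than use $h^{+}-h^{-}$, which equals $d$ when $\theta=0$ and would make the network linear.
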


\begin{proof}
The output-layer initialization ensures that the on-off top-down channels represent the correct descent signal:
\begin{equation*}
    \bar{\partial}_L - \underline{\partial}_L = \delta_L.
\end{equation*}
Assume that for some layer $l$,
\begin{equation}
    \bar{\partial}_l - \underline{\partial}_l = \delta_l.
\end{equation}
By the on-off top-down propagation rule,
\begin{align*}
    \bar{\partial}_{l-1}
    &=
    D_{l-1}
    \left(
        W_l^T \bar{\partial}_l
    \right), \\
    \underline{\partial}_{l-1}
    &=
    D_{l-1}
    \left(
        W_l^T \underline{\partial}_l
    \right).
\end{align*}
Taking the difference yields
\begin{align*}
    \bar{\partial}_{l-1} - \underline{\partial}_{l-1}
    &=
    D_{l-1}
    W_l^T
    (\bar{\partial}_l - \underline{\partial}_l) \\
    &=
    D_{l-1} W_l^T \delta_l.
\end{align*}
This is exactly the recursion for the descent signal corresponding to backpropagation. By induction, the equality holds for all layers.

Finally, the local update rule gives
\begin{equation}
    \Delta W_l
    =
    \eta
    \left(
        \bar{\partial}_l - \underline{\partial}_l
    \right)
    h_{l-1}^T
    =
    \eta \delta_l h_{l-1}^T
    =
    -\eta
    \frac{\partial L}{\partial W_l}.
\end{equation}
Thus the local update recovers the gradient descent update.
\end{proof}

\paragraph{Separate bottom-up and top-down weights}

The equivalence above assumes that the connectivity of the two top-down pathways is the transposed on-off structure corresponding to the bottom-up weights. In the model, bottom-up and top-down weights are parameterized separately rather than explicitly tied. However, the Hebbian rule applies identical updates to the corresponding bottom-up and top-down synapses. Therefore, if the pathways are initialized identically, they remain identical throughout training. If they are initialized close to alignment, for example by initializing close to zero, their difference remains small under matched updates, up to noise or other perturbations.

This explains why the symmetric setting recovers the gradient update exactly, while the weakly symmetric setting approximates it. In contrast, when bottom-up and top-down weights are initialized far from alignment, the propagated top-down signals no longer precisely match backpropagation errors, and learning becomes a more approximate form of gradient-based optimization.


\newpage
\section*{NeurIPS Paper Checklist}

\begin{enumerate}

\item {\bf Claims}
    \item[] Question: Do the main claims made in the abstract and introduction accurately reflect the paper's contributions and scope?
    \item[] Answer: \answerYes{} 
    \item[] Justification: the main claims made in the abstract and introduction accurately reflect the paper's contributions and scope
    \item[] Guidelines:
    \begin{itemize}
        \item The answer \answerNA{} means that the abstract and introduction do not include the claims made in the paper.
        \item The abstract and/or introduction should clearly state the claims made, including the contributions made in the paper and important assumptions and limitations. A \answerNo{} or \answerNA{} answer to this question will not be perceived well by the reviewers. 
        \item The claims made should match theoretical and experimental results, and reflect how much the results can be expected to generalize to other settings. 
        \item It is fine to include aspirational goals as motivation as long as it is clear that these goals are not attained by the paper. 
    \end{itemize}

\item {\bf Limitations}
    \item[] Question: Does the paper discuss the limitations of the work performed by the authors?
    \item[] Answer: \answerYes{} 
    \item[] Justification: End of the paper
    \item[] Guidelines:
    \begin{itemize}
        \item The answer \answerNA{} means that the paper has no limitation while the answer \answerNo{} means that the paper has limitations, but those are not discussed in the paper. 
        \item The authors are encouraged to create a separate ``Limitations'' section in their paper.
        \item The paper should point out any strong assumptions and how robust the results are to violations of these assumptions (e.g., independence assumptions, noiseless settings, model well-specification, asymptotic approximations only holding locally). The authors should reflect on how these assumptions might be violated in practice and what the implications would be.
        \item The authors should reflect on the scope of the claims made, e.g., if the approach was only tested on a few datasets or with a few runs. In general, empirical results often depend on implicit assumptions, which should be articulated.
        \item The authors should reflect on the factors that influence the performance of the approach. For example, a facial recognition algorithm may perform poorly when image resolution is low or images are taken in low lighting. Or a speech-to-text system might not be used reliably to provide closed captions for online lectures because it fails to handle technical jargon.
        \item The authors should discuss the computational efficiency of the proposed algorithms and how they scale with dataset size.
        \item If applicable, the authors should discuss possible limitations of their approach to address problems of privacy and fairness.
        \item While the authors might fear that complete honesty about limitations might be used by reviewers as grounds for rejection, a worse outcome might be that reviewers discover limitations that aren't acknowledged in the paper. The authors should use their best judgment and recognize that individual actions in favor of transparency play an important role in developing norms that preserve the integrity of the community. Reviewers will be specifically instructed to not penalize honesty concerning limitations.
    \end{itemize}

\item {\bf Theory assumptions and proofs}
    \item[] Question: For each theoretical result, does the paper provide the full set of assumptions and a complete (and correct) proof?
    \item[] Answer: \answerYes{} 
    \item[] Justification: Appendix A.4
    \item[] Guidelines:
    \begin{itemize}
        \item The answer \answerNA{} means that the paper does not include theoretical results. 
        \item All the theorems, formulas, and proofs in the paper should be numbered and cross-referenced.
        \item All assumptions should be clearly stated or referenced in the statement of any theorems.
        \item The proofs can either appear in the main paper or the supplemental material, but if they appear in the supplemental material, the authors are encouraged to provide a short proof sketch to provide intuition. 
        \item Inversely, any informal proof provided in the core of the paper should be complemented by formal proofs provided in appendix or supplemental material.
        \item Theorems and Lemmas that the proof relies upon should be properly referenced. 
    \end{itemize}

    \item {\bf Experimental result reproducibility}
    \item[] Question: Does the paper fully disclose all the information needed to reproduce the main experimental results of the paper to the extent that it affects the main claims and/or conclusions of the paper (regardless of whether the code and data are provided or not)?
    \item[] Answer: \answerYes{} 
    \item[] Justification: Main text + Appendix A.2
    \item[] Guidelines:
    \begin{itemize}
        \item The answer \answerNA{} means that the paper does not include experiments.
        \item If the paper includes experiments, a \answerNo{} answer to this question will not be perceived well by the reviewers: Making the paper reproducible is important, regardless of whether the code and data are provided or not.
        \item If the contribution is a dataset and\slash or model, the authors should describe the steps taken to make their results reproducible or verifiable. 
        \item Depending on the contribution, reproducibility can be accomplished in various ways. For example, if the contribution is a novel architecture, describing the architecture fully might suffice, or if the contribution is a specific model and empirical evaluation, it may be necessary to either make it possible for others to replicate the model with the same dataset, or provide access to the model. In general. releasing code and data is often one good way to accomplish this, but reproducibility can also be provided via detailed instructions for how to replicate the results, access to a hosted model (e.g., in the case of a large language model), releasing of a model checkpoint, or other means that are appropriate to the research performed.
        \item While NeurIPS does not require releasing code, the conference does require all submissions to provide some reasonable avenue for reproducibility, which may depend on the nature of the contribution. For example
        \begin{enumerate}
            \item If the contribution is primarily a new algorithm, the paper should make it clear how to reproduce that algorithm.
            \item If the contribution is primarily a new model architecture, the paper should describe the architecture clearly and fully.
            \item If the contribution is a new model (e.g., a large language model), then there should either be a way to access this model for reproducing the results or a way to reproduce the model (e.g., with an open-source dataset or instructions for how to construct the dataset).
            \item We recognize that reproducibility may be tricky in some cases, in which case authors are welcome to describe the particular way they provide for reproducibility. In the case of closed-source models, it may be that access to the model is limited in some way (e.g., to registered users), but it should be possible for other researchers to have some path to reproducing or verifying the results.
        \end{enumerate}
    \end{itemize}

\item {\bf Open access to data and code}
    \item[] Question: Does the paper provide open access to the data and code, with sufficient instructions to faithfully reproduce the main experimental results, as described in supplemental material?
    \item[] Answer: \answerYes{} 
    \item[] Justification: Code is provided in the supplemental material
    \item[] Guidelines:
    \begin{itemize}
        \item The answer \answerNA{} means that paper does not include experiments requiring code.
        \item Please see the NeurIPS code and data submission guidelines (\url{https://neurips.cc/public/guides/CodeSubmissionPolicy}) for more details.
        \item While we encourage the release of code and data, we understand that this might not be possible, so \answerNo{} is an acceptable answer. Papers cannot be rejected simply for not including code, unless this is central to the contribution (e.g., for a new open-source benchmark).
        \item The instructions should contain the exact command and environment needed to run to reproduce the results. See the NeurIPS code and data submission guidelines (\url{https://neurips.cc/public/guides/CodeSubmissionPolicy}) for more details.
        \item The authors should provide instructions on data access and preparation, including how to access the raw data, preprocessed data, intermediate data, and generated data, etc.
        \item The authors should provide scripts to reproduce all experimental results for the new proposed method and baselines. If only a subset of experiments are reproducible, they should state which ones are omitted from the script and why.
        \item At submission time, to preserve anonymity, the authors should release anonymized versions (if applicable).
        \item Providing as much information as possible in supplemental material (appended to the paper) is recommended, but including URLs to data and code is permitted.
    \end{itemize}

\item {\bf Experimental setting/details}
    \item[] Question: Does the paper specify all the training and test details (e.g., data splits, hyperparameters, how they were chosen, type of optimizer) necessary to understand the results?
    \item[] Answer: \answerYes{} 
    \item[] Justification: Text + code
    \item[] Guidelines:
    \begin{itemize}
        \item The answer \answerNA{} means that the paper does not include experiments.
        \item The experimental setting should be presented in the core of the paper to a level of detail that is necessary to appreciate the results and make sense of them.
        \item The full details can be provided either with the code, in appendix, or as supplemental material.
    \end{itemize}

\item {\bf Experiment statistical significance}
    \item[] Question: Does the paper report error bars suitably and correctly defined or other appropriate information about the statistical significance of the experiments?
    \item[] Answer: \answerYes{} 
    \item[] Justification: bars presented in the tables
    \item[] Guidelines:
    \begin{itemize}
        \item The answer \answerNA{} means that the paper does not include experiments.
        \item The authors should answer \answerYes{} if the results are accompanied by error bars, confidence intervals, or statistical significance tests, at least for the experiments that support the main claims of the paper.
        \item The factors of variability that the error bars are capturing should be clearly stated (for example, train/test split, initialization, random drawing of some parameter, or overall run with given experimental conditions).
        \item The method for calculating the error bars should be explained (closed form formula, call to a library function, bootstrap, etc.)
        \item The assumptions made should be given (e.g., Normally distributed errors).
        \item It should be clear whether the error bar is the standard deviation or the standard error of the mean.
        \item It is OK to report 1-sigma error bars, but one should state it. The authors should preferably report a 2-sigma error bar than state that they have a 96\% CI, if the hypothesis of Normality of errors is not verified.
        \item For asymmetric distributions, the authors should be careful not to show in tables or figures symmetric error bars that would yield results that are out of range (e.g., negative error rates).
        \item If error bars are reported in tables or plots, the authors should explain in the text how they were calculated and reference the corresponding figures or tables in the text.
    \end{itemize}

\item {\bf Experiments compute resources}
    \item[] Question: For each experiment, does the paper provide sufficient information on the computer resources (type of compute workers, memory, time of execution) needed to reproduce the experiments?
    \item[] Answer: \answerYes{} 
    \item[] Justification: Appendix A.2
    \item[] Guidelines:
    \begin{itemize}
        \item The answer \answerNA{} means that the paper does not include experiments.
        \item The paper should indicate the type of compute workers CPU or GPU, internal cluster, or cloud provider, including relevant memory and storage.
        \item The paper should provide the amount of compute required for each of the individual experimental runs as well as estimate the total compute. 
        \item The paper should disclose whether the full research project required more compute than the experiments reported in the paper (e.g., preliminary or failed experiments that didn't make it into the paper). 
    \end{itemize}
    
\item {\bf Code of ethics}
    \item[] Question: Does the research conducted in the paper conform, in every respect, with the NeurIPS Code of Ethics \url{https://neurips.cc/public/EthicsGuidelines}?
    \item[] Answer: \answerYes{} 
    \item[] Justification: the research conducted in the paper conform with the NeurIPS Code of Ethics
    \item[] Guidelines:
    \begin{itemize}
        \item The answer \answerNA{} means that the authors have not reviewed the NeurIPS Code of Ethics.
        \item If the authors answer \answerNo, they should explain the special circumstances that require a deviation from the Code of Ethics.
        \item The authors should make sure to preserve anonymity (e.g., if there is a special consideration due to laws or regulations in their jurisdiction).
    \end{itemize}

\item {\bf Broader impacts}
    \item[] Question: Does the paper discuss both potential positive societal impacts and negative societal impacts of the work performed?
    \item[] Answer: \answerNA{} 
    \item[] Justification: Implications to neuroscience
    \item[] Guidelines:
    \begin{itemize}
        \item The answer \answerNA{} means that there is no societal impact of the work performed.
        \item If the authors answer \answerNA{} or \answerNo, they should explain why their work has no societal impact or why the paper does not address societal impact.
        \item Examples of negative societal impacts include potential malicious or unintended uses (e.g., disinformation, generating fake profiles, surveillance), fairness considerations (e.g., deployment of technologies that could make decisions that unfairly impact specific groups), privacy considerations, and security considerations.
        \item The conference expects that many papers will be foundational research and not tied to particular applications, let alone deployments. However, if there is a direct path to any negative applications, the authors should point it out. For example, it is legitimate to point out that an improvement in the quality of generative models could be used to generate Deepfakes for disinformation. On the other hand, it is not needed to point out that a generic algorithm for optimizing neural networks could enable people to train models that generate Deepfakes faster.
        \item The authors should consider possible harms that could arise when the technology is being used as intended and functioning correctly, harms that could arise when the technology is being used as intended but gives incorrect results, and harms following from (intentional or unintentional) misuse of the technology.
        \item If there are negative societal impacts, the authors could also discuss possible mitigation strategies (e.g., gated release of models, providing defenses in addition to attacks, mechanisms for monitoring misuse, mechanisms to monitor how a system learns from feedback over time, improving the efficiency and accessibility of ML).
    \end{itemize}
    
\item {\bf Safeguards}
    \item[] Question: Does the paper describe safeguards that have been put in place for responsible release of data or models that have a high risk for misuse (e.g., pre-trained language models, image generators, or scraped datasets)?
    \item[] Answer: \answerNA{} 
    \item[] Justification: No risks
    \item[] Guidelines:
    \begin{itemize}
        \item The answer \answerNA{} means that the paper poses no such risks.
        \item Released models that have a high risk for misuse or dual-use should be released with necessary safeguards to allow for controlled use of the model, for example by requiring that users adhere to usage guidelines or restrictions to access the model or implementing safety filters. 
        \item Datasets that have been scraped from the Internet could pose safety risks. The authors should describe how they avoided releasing unsafe images.
        \item We recognize that providing effective safeguards is challenging, and many papers do not require this, but we encourage authors to take this into account and make a best faith effort.
    \end{itemize}

\item {\bf Licenses for existing assets}
    \item[] Question: Are the creators or original owners of assets (e.g., code, data, models), used in the paper, properly credited and are the license and terms of use explicitly mentioned and properly respected?
    \item[] Answer: \answerYes{} 
    \item[] Justification: References are provided
    \item[] Guidelines:
    \begin{itemize}
        \item The answer \answerNA{} means that the paper does not use existing assets.
        \item The authors should cite the original paper that produced the code package or dataset.
        \item The authors should state which version of the asset is used and, if possible, include a URL.
        \item The name of the license (e.g., CC-BY 4.0) should be included for each asset.
        \item For scraped data from a particular source (e.g., website), the copyright and terms of service of that source should be provided.
        \item If assets are released, the license, copyright information, and terms of use in the package should be provided. For popular datasets, \url{paperswithcode.com/datasets} has curated licenses for some datasets. Their licensing guide can help determine the license of a dataset.
        \item For existing datasets that are re-packaged, both the original license and the license of the derived asset (if it has changed) should be provided.
        \item If this information is not available online, the authors are encouraged to reach out to the asset's creators.
    \end{itemize}

\item {\bf New assets}
    \item[] Question: Are new assets introduced in the paper well documented and is the documentation provided alongside the assets?
    \item[] Answer: \answerYes{} 
    \item[] Justification: A detailed git page will be published
    \item[] Guidelines:
    \begin{itemize}
        \item The answer \answerNA{} means that the paper does not release new assets.
        \item Researchers should communicate the details of the dataset\slash code\slash model as part of their submissions via structured templates. This includes details about training, license, limitations, etc. 
        \item The paper should discuss whether and how consent was obtained from people whose asset is used.
        \item At submission time, remember to anonymize your assets (if applicable). You can either create an anonymized URL or include an anonymized zip file.
    \end{itemize}

\item {\bf Crowdsourcing and research with human subjects}
    \item[] Question: For crowdsourcing experiments and research with human subjects, does the paper include the full text of instructions given to participants and screenshots, if applicable, as well as details about compensation (if any)? 
    \item[] Answer: \answerNA{} 
    \item[] Justification:  the paper does not involve crowdsourcing nor research with human subjects.
    \item[] Guidelines:
    \begin{itemize}
        \item The answer \answerNA{} means that the paper does not involve crowdsourcing nor research with human subjects.
        \item Including this information in the supplemental material is fine, but if the main contribution of the paper involves human subjects, then as much detail as possible should be included in the main paper. 
        \item According to the NeurIPS Code of Ethics, workers involved in data collection, curation, or other labor should be paid at least the minimum wage in the country of the data collector. 
    \end{itemize}

\item {\bf Institutional review board (IRB) approvals or equivalent for research with human subjects}
    \item[] Question: Does the paper describe potential risks incurred by study participants, whether such risks were disclosed to the subjects, and whether Institutional Review Board (IRB) approvals (or an equivalent approval/review based on the requirements of your country or institution) were obtained?
    \item[] Answer: \answerNA{} 
    \item[] Justification: the paper does not involve crowdsourcing nor research with human subjects
    \item[] Guidelines:
    \begin{itemize}
        \item The answer \answerNA{} means that the paper does not involve crowdsourcing nor research with human subjects.
        \item Depending on the country in which research is conducted, IRB approval (or equivalent) may be required for any human subjects research. If you obtained IRB approval, you should clearly state this in the paper. 
        \item We recognize that the procedures for this may vary significantly between institutions and locations, and we expect authors to adhere to the NeurIPS Code of Ethics and the guidelines for their institution. 
        \item For initial submissions, do not include any information that would break anonymity (if applicable), such as the institution conducting the review.
    \end{itemize}

\item {\bf Declaration of LLM usage}
    \item[] Question: Does the paper describe the usage of LLMs if it is an important, original, or non-standard component of the core methods in this research? Note that if the LLM is used only for writing, editing, or formatting purposes and does \emph{not} impact the core methodology, scientific rigor, or originality of the research, declaration is not required.
    \item[] Answer: \answerNA{} 
    \item[] Justification: the core method development in this research does not involve LLMs
    \item[] Guidelines:
    \begin{itemize}
        \item The answer \answerNA{} means that the core method development in this research does not involve LLMs as any important, original, or non-standard components.
        \item Please refer to our LLM policy in the NeurIPS handbook for what should or should not be described.
    \end{itemize}

\end{enumerate}

\end{document}